\documentclass{article}

\usepackage{arxiv}

\usepackage[utf8]{inputenc} 
\usepackage[T1]{fontenc}    
\usepackage{hyperref}       
\usepackage{url}            
\usepackage{booktabs}       
\usepackage{amsfonts}       
\usepackage{nicefrac}       
\usepackage{microtype}      
\usepackage{lipsum}
\usepackage{lscape}
\usepackage{natbib} 
\citestyle{nature}

\usepackage{booktabs}   
\usepackage{siunitx}
\usepackage{makeidx}
\usepackage{subcaption}
\usepackage{array} 
\usepackage{amssymb}
\usepackage{graphicx}
\usepackage{enumerate} 
\usepackage{amsmath} 
\usepackage{bigstrut}
\usepackage{amsopn}
\usepackage{algorithm}
\usepackage{algorithmicx}
\usepackage{algpseudocode}
\usepackage{float}
\usepackage{bm} 
\usepackage{rotating}
\usepackage{multicol}
\usepackage{multirow} 
\usepackage{tabularx}
\usepackage{arydshln} 
\usepackage{amsthm}
\newtheorem{theorem}{Theorem}
\newtheorem{lemma}{Lemma}
 \newtheorem{definition}{Definition}

\theoremstyle{definition}
\newtheorem{observation}[theorem]{Observation}
\usepackage{hyperref}
\hypersetup{
  colorlinks,
  allcolors=blue,
  urlcolor=blue,
}
\usepackage{color}
\urlstyle{same}
\usepackage[misc]{ifsym}
 
\usepackage{blindtext}
\usepackage{scrextend}

\title{CDF Transform-and-Shift: An effective way to deal with datasets of inhomogeneous cluster densities}

\author{
  Ye Zhu$^{ (\textrm{\Letter})}$
  \\
  School of Information Technology\\
  Deakin University\\
  Victoria, Australia 3125 \\
  \texttt{ye.zhu@ieee.org} \\ 
   \And
  Kai Ming Ting\\
  National Key Laboratory for Novel Software Technology\\
  Nanjing University\\
  Nanjing, China 210023 \\
  \texttt{tingkm@nju.edu.cn} \\ 
     \And
  Mark~J.~Carman\\
  DEIB\\  
  Politecnico di Milano\\
  Milan, Italy 20133\\
  \texttt{mark.carman@polimi.it} \\  
     \And
  Maia Angelova\\
  School of Information Technology\\
  Deakin University\\
  Victoria, Australia 3125 \\
  \texttt{maia.a@deakin.edu.au} \\ 
}

\begin{document}
\maketitle

\begin{abstract}
The problem of inhomogeneous cluster densities has been a long-standing issue for distance-based and density-based algorithms in clustering and anomaly detection. These algorithms implicitly assume that all clusters have approximately the same density. As a result, they often exhibit a bias towards dense clusters in the presence of sparse clusters. Many remedies have been suggested; yet, we show that they are partial solutions which do not address the issue satisfactorily. 
To match the implicit assumption, we propose to transform a given dataset such that the transformed clusters have approximately the same density while all regions of {\em locally} low density become {\em globally} low density---homogenising cluster density while preserving the cluster structure of the dataset.
We show that this can be achieved by using a new multi-dimensional Cumulative Distribution Function in a transform-and-shift method. The method can be applied to every dataset, before the dataset is used in many existing algorithms to match their implicit assumption without algorithmic modification. We show that the proposed method performs better than existing remedies.  
\end{abstract}

\keywords{ Density-ratio \and  Density-based Clustering \and  kNN anomaly detection \and inhomogeneous densities \and  Scaling \and  Shift}

\section{Introduction}
\label{sec:introduction}
 
Many existing distance-based and density-based algorithms in clustering and anomaly detection have an implicit assumption that all clusters have approximately the same density. As a result,
these algorithms have a bias towards dense clusters and produce poor performance when presented with datasets having inhomogeneous cluster densities 
\cite{aggarwal2013data}. For example,  DBSCAN \cite{ester1996density} is biased towards grouping neighbouring dense clusters into a single cluster, in the presence of a sparse cluster \cite{ZHU2016983}.

A number of methods have been proposed to ``correct'' this bias, but many of them are tailored to specific algorithms. For example, in the context of clustering, 
to overcome the issue of DBSCAN \cite{ester1996density} that only uses a global density threshold to identify high-density clusters, OPTICS \cite{ankerst1999optics} utilises multiple density thresholds to extract clusters with varied densities based on a ``reachability'' (${k}^{th}$-nearest neighbour distance) plot. SNN clustering \cite{ertoz2003finding} proposes adaptive similarity measures in place of the distance measure in DBSCAN. In addition, DP \cite{rodriguez2014clustering} identifies cluster centres which have local maximum density and are well separated, and then assigns each remaining point to its cluster centre via a linking scheme.  

However, these methods are not satisfactory in addressing the density bias issue. The ranking on ``reachability'' plot of OPTICS depends heavily on the original data density distribution and the plot may merge different clusters which are close and dense \cite{ZHU2016983}. The performance of SNN is sensitive to the $k$ setting in the ${k}^{th}$-nearest neighbour distance calculation that depends on the local data density \cite{tan2020mutual}. Although DP uses local maximum density to identify cluster centres, the centre selection process is based on the density ranking on a decision graph \cite{rodriguez2014clustering}. In a nutshell, their performances are still affected by data density variation between clusters. 

To demonstrate the shortcomings of these four density-based clustering algorithms, we apply them to an image segmentation task, and the data points are represented in the LAB space \cite{szeliski2010computer}. Table \ref{seg1} shows their best clustering outcomes on the image shown in Figure~\ref{lab:aa}. It can be seen that all these clustering algorithms cannot identify the sky, cloud and building well, due to the significant density variation in the LAB space.\footnote{In DBSCAN, all points that are not clustered are noise points because they have very low density. If using a lower density threshold, DBSCAN will merge Sky and Cloud into a single cluster.} 

\begin{figure}[!p]
\centering 
   \centering\captionsetup{width=1\linewidth}%
     \includegraphics[width=0.37\textwidth]{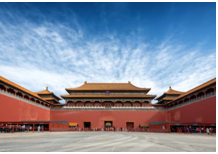} 
   \caption{An image used for segmentation.} 
        \label{lab:aa} 
\end{figure}

\begin{table} [!p] 
  \renewcommand{\arraystretch}{1.2}
 \setlength{\tabcolsep}{3.5pt}
\centering
\caption{Image segmentation on the image shown in Figure~\ref{lab:aa}. The colours in the scatter plot indicate different clusters identified by a clustering algorithm. The results only display the best-matched clusters wrt sky, cloud and buildings, i.e., the grey points in LAB scatter plots are labelled as noise or unmatched clusters by the clustering algorithm. We searched different parameters in a reasonable range. The bandwidth of the density estimator used in DBSCAN and DP is set to $\epsilon=0.1$.} 
  \begin{tabular}{|c|c|ccc|}
    \hline
     & LAB & Cluster 1  & Cluster 2   & Cluster 3  \\
      \hline
       \begin{turn}{90}  \ \ \   DBSCAN \end{turn}&     
       \includegraphics[width=1.37in]{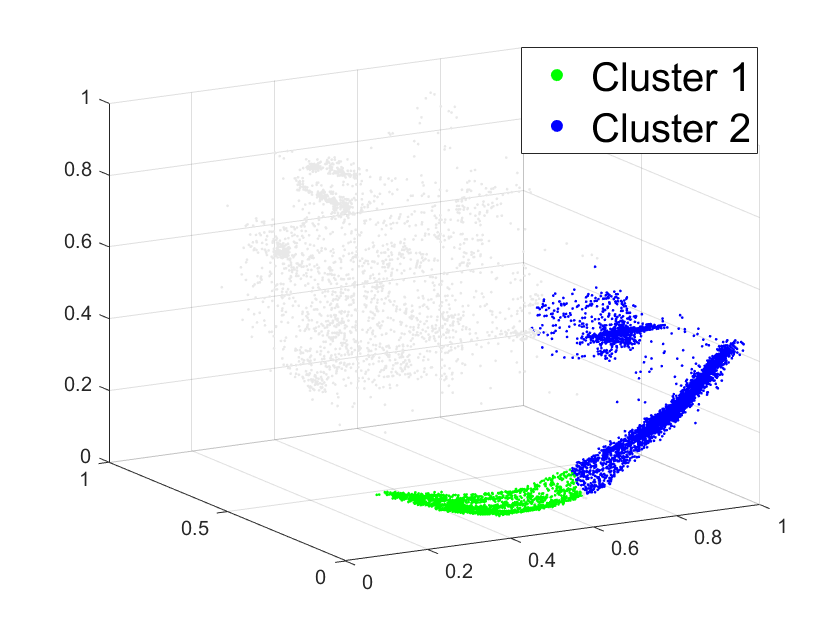} &  
       \includegraphics[width=1.37in]{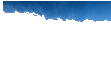} &
      \includegraphics[width=1.37in]{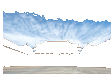} &
       \includegraphics[width=1.37in]{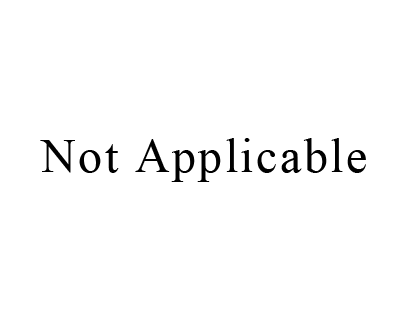} \\      \hdashline
       \begin{turn}{90} \ \ \ \  OPTICS \end{turn}&      
       \includegraphics[width=1.37in]{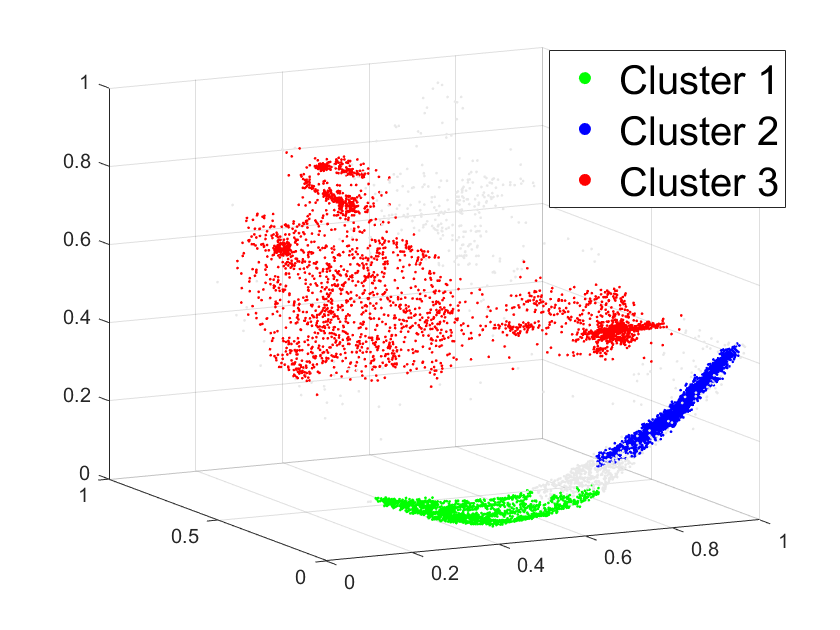} & 
       \includegraphics[width=1.37in]{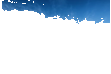} &
      \includegraphics[width=1.37in]{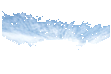} &
      \includegraphics[width=1.37in]{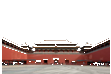}  \\      \hdashline
       \begin{turn}{90} \ \ \ \ \ \ \ SNN \end{turn}&      
       \includegraphics[width=1.37in]{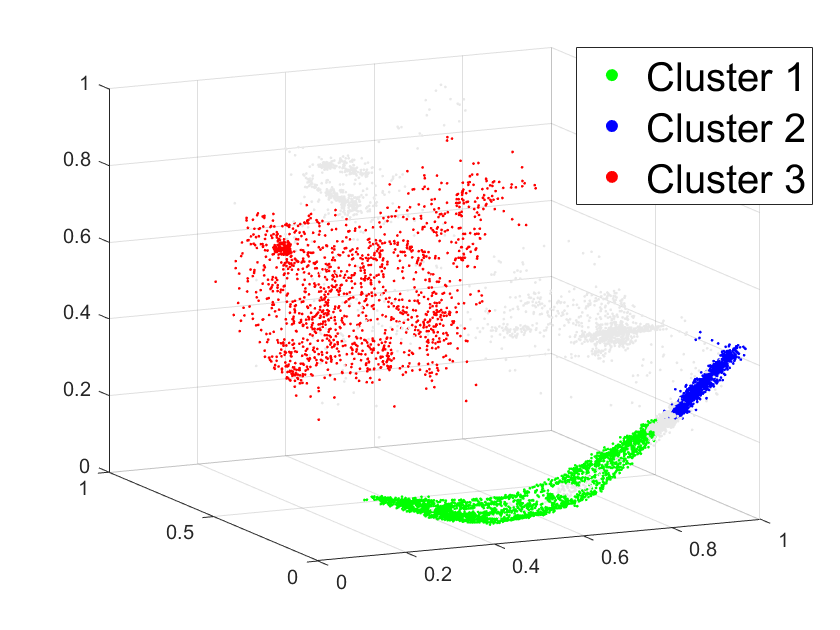} & 
       \includegraphics[width=1.37in]{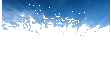} &
      \includegraphics[width=1.37in]{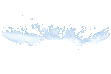} &
      \includegraphics[width=1.37in]{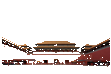}  \\      \hdashline
       \begin{turn}{90} \ \ \  \ \ \ \ \  DP \end{turn}&     
       \includegraphics[width=1.37in]{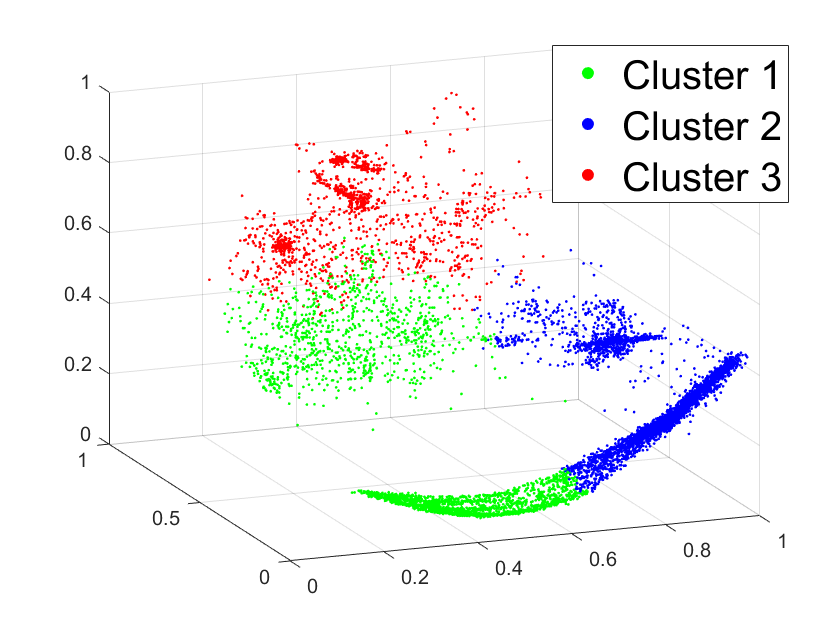} &  
       \includegraphics[width=1.37in]{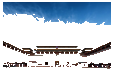} &
      \includegraphics[width=1.37in]{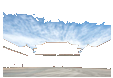} &
      \includegraphics[width=1.37in]{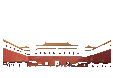}  \\   
      \hline 
  \end{tabular}
\label{seg1}
\end{table} 

Recently, density-ratio has been proposed as a principled  method \cite{ZHU2016983} in addressing density bias. It aims to transform the data such that the estimated density of each transformed point approximates the density-ratio of that point in the original space. This effectively transforms locally low-density gaps between clusters in the original space into globally low-density gaps in the transformed space. As a result, all clusters can be easily extracted using a single density threshold. 

On the same principle, we propose a more effective data transform-and-shift method based on a CDF (Cumulative Density Function). It transforms the space to one in which the cluster gaps are more obvious and all clusters have the same/similar density even though the geometry of each cluster may have changed.   The proposed method is an elegant solution that equalises the density of all clusters (of high-density regions) as well as the density of low-density regions. This matches the implicit assumption of existing algorithms. The proposed method is a pre-processing method such that the transformed-and-shifted data can be used to reduce the underlying bias (towards dense clusters) and boost the performance of existing density-based algorithms without any algorithmic modification. 

This paper makes the following three contributions:

\begin{enumerate}[(a)]  
\item Analysing the limitations and drawbacks of existing density-ratio based methods, i.e., existing CDF transform methods are either individual attribute-based (e.g., ReScale \cite{ZHU2016983}) or individual point-based (e.g., DScale \cite{DSCALE}); thus, they cannot solve the density bias issue adequately. More specifically, ReScale is a one-dimensional method which treats each dimension independently. The independent assumption limits its applications to some datasets only. DScale applies a multi-dimensional CDF without shifting; and it must be incorporated into an existing algorithm. DScale yields a non-metric and asymmetric measure. Both limit their ability to apply the transform multiple times in order to enhance the impact.
\item
Proposing a new CDF-based Transform-and-Shift method called CDF-TS that changes the volume of each point's neighbourhood simultaneously in the full multi-dimensional space. As far as we know, this is the first attempt to perform a multi-dimensional CDF transform to achieve the desired effect of homogenising the distribution of an entire dataset w.r.t a density estimator, i.e., making the density of all clusters homogeneous without impairing cluster structures. 
\item
Applying the new transform-and-shift method to both density-based clustering and distance-based anomaly detection to demonstrate its superiority over four existing algorithms, i.e., DBSCAN \cite{ester1996density}, DP \cite{rodriguez2014clustering},	 LGD \cite{li2019local}, EC \cite{wang2020extreme}; and one $k$NN anomaly detector. 
\end{enumerate} 

The proposed method   CDF-TS differs from the existing density-ratio algorithms of ReScale \cite{ZHU2016983} and DScale \cite{DSCALE}  in three aspects:
\begin{enumerate}[(i)]
\item Methodology: While ReScale \cite{ZHU2016983}, DScale \cite{DSCALE} and the proposed CDF-TS follow the same principled method and have the same aim, the proposed CDF Transform-and-Shift method is a multi-dimensional technique incorporating both transformation and point-shift, which greatly expands the method's applicability. In contrast, ReScale is a one-dimensional transformation and point-shift technique; and DScale is a multi-dimensional transformation without point-shift. The `complete' CDF-TS method produces a more satisfactory result for both clustering and anomaly detection, which we will show in Section \ref{sec_result}. 
\item Metric compliance: CDF-TS and ReScale create a similarity matrix which is a metric; whereas the one produced by DScale \cite{DSCALE} is not a metric, i.e., it does not satisfy the symmetry or triangle inequalities. Thus, DScale cannot be used for algorithms that require the property of symmetry, e.g., we cannot visualise the scaled distance or data distribution using MDS \cite{borg2012applied}.
\item Ease of use: Like ReScale, CDF-TS is a preprocessing method to transform data. The targeted (clustering or anomaly detection) algorithm can thus be applied without altering to the transformed data. In contrast, DScale requires an algorithmic modification before using it.
\end{enumerate}

The rest of the paper is organised as follows. The related work in a broader context is described in Section \ref{related_work}. The most closely related work is organised into the following three sections. Section \ref{issue} presents the issues of inhomogeneous cluster density in density-based clustering and distance-based anomaly detection. Section \ref{denRatio} describes the density-ratio estimation as a principle to address the issues of inhomogeneous density. Section \ref{cdfSca} presents the two existing CDF scaling methods based on density-ratio. Section \ref{sec_CDF-TS} proposes the local-density transform-and-shift as a multi-dimensional CDF scaling method. Section \ref{sec_result} empirically evaluates the performance of existing density-based clustering and distance-based anomaly detection algorithms on the transformed-and-shifted datasets. Discussion and conclusions are provided in  the last two sections.

\section{Related work in a broader context}
\label{related_work}

There are many algorithms proposed to improve density-based clustering performance on a dataset with clusters of varied densities. OPTICS \cite{ankerst1999optics} and HDBSCAN \cite{HDBSCAN2013} utilise different thresholds to extract clusters of different densities. 	  IS-DBSCAN \cite{cassisi2013enhancing}, DP \cite{rodriguez2014clustering}, LC \cite{Chen2018}, LGD \cite{li2019local}, DPC-DBFN \cite{lotfi2020density} and EC \cite{wang2020extreme} rely on local density estimation.  IS-DBSCAN \cite{cassisi2013enhancing} uses a ranking procedure based on the kNN distances to efficiently identify different densities in the dataset. DP \cite{rodriguez2014clustering} identifies cluster modes which have local maximum density and are well separated, and then assigns each remaining point in the dataset to a cluster mode via a linking scheme. LC \cite{Chen2018} and DPC-DBFN \cite{lotfi2020density} utilise enhanced local density estimators based on a kNN graph and a fuzzy neighbourhood measure, respectively, to improve the clustering performance of DP. EC \cite{wang2020extreme} redefines cluster centres as points with the highest density in their neighbourhoods to overcome the shortages of DP. To handle high-dimensional data with varied densities, LGD \cite{li2019local} estimates the local density for each point based on the average distance over the nearest neighbours of that point. In addition, MuDi-Stream \cite{amini2016mudi} uses a grid-based method to detect adaptive micro-clusters for clustering streaming data with clusters of varied densities. MDSC \cite{fahy2019finding}  uses a swarm intelligence approach to adaptively merge micro-clusters and improve the performance of clustering dynamic data streams.

ReScale \cite{ZHU2016983} and DScale \cite{DSCALE} are density-ratio based scaling methods which are able to improve existing density-based clustering algorithm on detecting clusters with varied densities. In addition, there are data-dependent similarity measures, such as SNN \cite{ertoz2003finding} and aNNE \cite{qin2019nearest}, which can be directly applied on an existing density-based clustering algorithm to overcome their weakness of detecting varied densities. 

LOF \cite{LOF2000} is a local density-based approach for anomaly detection. The LOF score of $x$ is the ratio of the average density of $x$'s $k$-nearest neighbours to the density of $x$, where the density of $x$ is inversely proportional to the average distance to its $k$-nearest-neighbours \cite{LOF2000}. It has the ability to identify local anomalies, which are relatively low-density points close to a dense cluster.  With an isolation-based approach to detect local anomalies, iNNE \cite{bandaragoda2018isolation} uses a nearest neighbour ensemble to calculate a ratio of nearest neighbour distances. 

There are many unsupervised distance metric learning algorithms which aim to transform data from a high-dimensional space to a low-dimensional space, e.g., global methods such as PCA \cite{jolliffe2011principal} and KUMMP \cite{wang2011unsupervised}); and local methods such as Isomap \cite{tenenbaum2000global} and t-SNE \cite{maaten2008visualizing}. These methods are usually used as dimension reduction methods such that data clusters are transformed into a low-dimensional space based on some local/global objective function in linear/nonlinear approaches \cite{BLUM1997245,Wang2015}. For example, t-SNE \cite{maaten2008visualizing} first uses the Gaussian kernel to convert similarities between data points to joint probabilities in the original space. Then it produces a low-dimensional space that minimises the Kullback-Leibler divergence between the joint probabilities of the low-dimensional space and the original space. 

The objective of CDF-TS is to reduce the density variation of a given dataset that yields a transformed dataset with clusters of homogeneous density. And this is achieved  without impairing the cluster structures and changing the dimensionality in the given dataset. The transform step employs CDF; the shift step shifts data points based on the local density distribution in the original space. None of the distance metric learning algorithms has this objective, or employs the CDF transform and/or point shift as a methodology. The objective of CDF-TS is specifically targeted to reduce the underlying bias (towards dense clusters) in density-based algorithms. In contrast, metric learning algorithms are generic dimensional reduction methods, often targeted for visualisation. With reference to the categorisation of local/global and linear/nonlinear methods in distance metric learning, CDF-TS uses a linear method to do the CDF transform; and it is a local method because of the use of local density distribution. 

In a nutshell, CDF-TS is neither a metric learning method nor a dimension reduction method.

As mentioned in the last section,  algorithms using a  method closely related to that of CDF-TS are ReScale \cite{ZHU2016983} and  DScale \cite{DSCALE}. With the differences mentioned earlier, we will show that CDF-TS greatly expands the method's applicability and enables different density estimators to be employed in both clustering and anomaly detection tasks.

\section{Effect of inhomogeneous cluster densities}
\label{issue}
 
We now discuss the effect of inhomogeneous cluster densities on density-based clustering and distance-based anomaly detection algorithms.
 
\subsection{Density-based clustering}

This section describes the condition under which density-based algorithm DBSCAN fails to detect all clusters in a dataset of inhomogeneous cluster densities.

Let $ D=\lbrace x_{1}, x_{2}, ..., x_{n} \rbrace$, $x_{i}\in \mathbb{R}^{d}, x_{i} \sim F$ denote a dataset of $n$ points, each sampled independently from a distribution $F$. Let $\widehat{pdf}(x)$ denote the density estimate of point $x$ which approximates the true density $pdf(x)$. In addition, let $\mathcal{N}(x; \epsilon)$ be the $\epsilon$-neighbourhood of $x$, $\mathcal{N}(x; \epsilon)=\lbrace y \in D ~|~ s(x,y) \leqslant \epsilon \rbrace$, where $s(\cdot,\cdot)$ is the distance function ($s:{\mathbb{R}}^{d}\times {\mathbb{R}}^{d}\rightarrow {\mathbb{R}}$).

The classic density-based clustering algorithm DBSCAN \cite{ester1996density}, estimates the density ${pdf}(x)$ of a point $x$ by counting points within a small $\epsilon$-neighbourhood as follows:
\begin{equation}
	\widehat{pdf}_{\epsilon}(x)=\frac{1}{nV_{\epsilon}} \vert  \mathcal{N}(x; \epsilon) \vert=\frac{\vert \lbrace y \in D ~|~ s(x,y) \leqslant \epsilon \rbrace \vert}{nV_{\epsilon}}
	\label{epsN}
\end{equation}

\noindent where $V_{\epsilon} \propto \epsilon^d$ is the volume of a $d$-dimensional ball of radius $\epsilon$.

A set of clusters $\{C_{1},\dots, C_{\varsigma}\}$ is defined as non-empty and non-intersecting subsets: $C_i\subset D, C_i\neq  \emptyset , \forall_{i \ne j} \ C_{i} \cap C_{j}= \emptyset $. Let $c_{i}=\arg\max_{{x\in C_{i}}}\widehat{pdf}(x)$ denote the mode (point of the highest estimated density) for cluster $C_{i}$;  $p_{i}=\widehat{pdf}(c_{i})$ denote the corresponding peak density value. 

DBSCAN uses a global density threshold to identify core points (which have densities higher than the threshold); then it links neighbouring core points together to form clusters \cite{ester1996density}. It is defined as follows.

\begin{definition}
A \emph{core point} is a point with an estimated density above or equal to a user-specified threshold $\tau$, i.e., $(\widehat{pdf}_{\epsilon}(x) \geqslant \tau) \leftrightarrow Core(x)=1$, where $Core$ denotes a set indicator function. 
\end{definition}

\begin{definition}
Using a density estimator with density threshold $\tau$, a point $x_1$ is density connected with another point $x_p$ in a sequence of $p$ unique points from $D$, i.e., $\{x_1,x_2,x_3,...,x_p\}$:  $CON_{\epsilon}^{\tau}(x_1, x_p)$ is defined as: 
 
\begin{equation*}
\small
    CON_{\epsilon}^{\tau}(x_1, x_p) \leftrightarrow  \left\{ \begin{array}{l}
     \text{(i) if $p=2$}:  (x_1 \in \mathcal{N}_{\epsilon}(x_p))  \wedge  (Core(x_1)  \vee Core(x_p)); \\
    \text{(ii) if $p>2$}:  \exists_{(x_1,x_2,...x_{p})} ((\forall_{i\in\{2,...,p\}} \  x_{i-1}  \in  \mathcal{N}_{\epsilon}(x_{i}))  \wedge (\forall_{i\in \{2,...,p-1\}} \  Core(x_i))).
    \end{array} \right. 
\end{equation*}  
\label{def:connect}
\end{definition}

\begin{definition}
A cluster detected by the density-based algorithm DBSCAN is a maximal set of density connected instances, i.e., ${C}_i=\{x\in D \ | \ CON_{\epsilon}^{\tau}(x, c_i)\}$, where $ c_i=\arg\max_{\substack{x\in \mathbb{C}_i}}\widehat{pdf}_{\epsilon}(x)$ is the cluster mode.
\end{definition}

For this kind of algorithm to find all clusters in a dataset, the data distribution must have the following necessary condition: the peak density of each cluster must be greater than the maximum over all possible paths of the minimum density along any path linking any two modes.\footnote{A path linking two modes ${c}_{i}$ and ${c}_{j}$ is defined as a sequence of unique points starting with ${c}_{i}$ and ending with ${c}_{j}$ where adjacent points lie in each other's $\epsilon$-neighbourhood.} This condition is formally described by Zhu et al \cite{ZHU2016983} as follows: 
\begin{equation}
 \min_{{k\in \lbrace1,\dots,\varsigma \rbrace}} c_{k} > \max_{{i\neq j\in \lbrace1,\dots,\varsigma \rbrace}} g_{ij} 
 \label{eqn_condition}
 \end{equation} 

\noindent where $g_{ij}$ is the largest of the minimum density along any path linking the mode of clusters $C_{i}$ and $C_{j}$.

This condition implies that there must exist a threshold $\tau$ that can be used to break all paths between the modes by assigning regions with a density less than $\tau$ to noise. Otherwise, if some cluster mode has a density lower than that of a low-density region between other clusters, then this kind of density-based clustering algorithm will fail to find all clusters. Either some high-density clusters will be merged (when a lower density threshold is used), or low-density clusters will be designated as noise (when a higher density threshold is used). To illustrate, Figure \ref{fig1:a} shows that using a high threshold $\tau_1$ will cause all points in Cluster $C_3$ to be assigned to noise but using a low threshold $\tau_2$ will cause points in $C_1$ and $C_2$ to be assigned to the same cluster.

\begin{figure}
\centering
  \begin{subfigure}[b]{0.346\textwidth}
  \centering\captionsetup{width=.99\linewidth}%
    \includegraphics[width=1\textwidth]{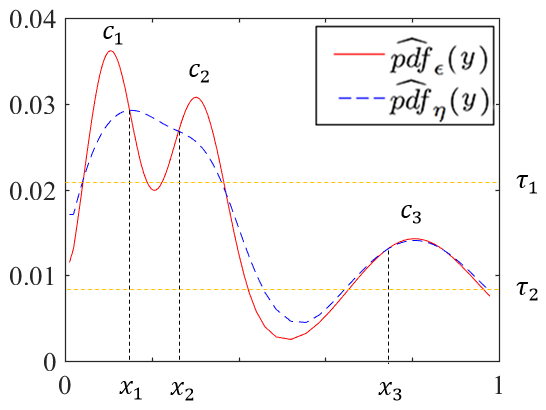}
    \caption{Original data}
    \label{fig1:a}
  \end{subfigure}  %
    \begin{subfigure}[b]{0.346\textwidth}
        \centering\captionsetup{width=.99\linewidth}%
    \includegraphics[width=1\textwidth]{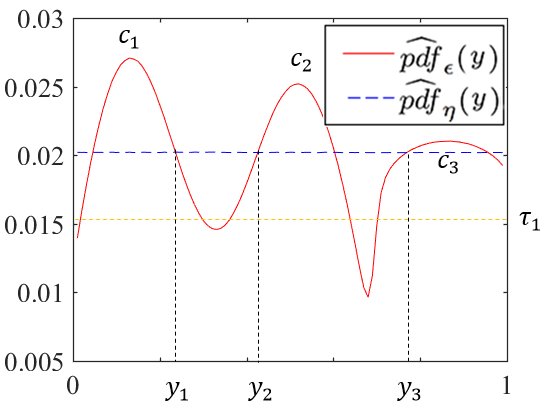}
    \caption{ReScaled data}
        \label{fig1:c}
  \end{subfigure}
   \caption{(a) A mixture of three one-dimensional Gaussian distributions that cannot be separated using a single density threshold; (b) Density distribution on ReScaled data of (a), where a single density threshold can be found to separate all three clusters. Note that point $x_1$, $x_2$ and $x_3$ are shifted to $y_1$, $y_2$ and $y_3$, respectively. $\eta$ is a larger bandwidth than $\epsilon$.}
    \label{fig1} 
\end{figure}

\subsection{$k$NN anomaly detection}

A classic nearest-neighbour based anomaly detection algorithm assigns an anomaly score to an instance based on its distance to the $k$-{th} nearest neighbour \cite{Ramaswamy:2000}. The instances with the largest anomaly scores are identified as anomalies.

On a dataset of inhomogeneous cluster densities, we show that this kind of anomaly detector fails to detect two types of anomalies, i.e., local anomalies and anomalous clusters.

Given a dataset and the parameter $k$, the density of $x$ can be estimated using a $k$-th nearest neighbour density estimator (as used by the classic $k$NN anomaly detector \cite{Ramaswamy:2000}):
\begin{equation}
  \widehat{pdf}_{\epsilon_k(x)}(x) = \frac{1}{nV_{\epsilon_k(x)}} \vert  \mathcal{N}(x;\epsilon_k(x))  \vert = \frac{k}{n \times V_{\epsilon_k(x)}} \propto (\frac{1}{\epsilon_k(x)})^d
   	\label{kdist}
 \end{equation}

 \noindent
where $\epsilon_k(x)$ is the distance between $x \in \mathbb{R}^d$ and its $k$-th nearest neighbour in a dataset $D$.

Note that the $k$-th nearest neighbour distance $\epsilon_k(x)$ is a proxy to the density of $x$, i.e., high $\epsilon_k(x)$ indicates low density, and vice versa.

Let $C$ be the set of all normal points in a dataset $D$. The condition under which the classic $k$NN anomaly detector \emph{could}, with an appropriate setting of a density/distance threshold, identify every anomaly $y$ in $A=D\setminus C$  is given as follows:
\begin{equation}
\min_{y\in A} \ \epsilon_k(y) > \max_{x\in C} \ \epsilon_k(x) 
 \label{knn}
\end{equation}

Equation \ref{knn} states that all anomalies must have the highest $k$NN distances (or the lowest densities) in order to detect them. In other words, $k$NN anomaly detectors can detect both global anomalies and scattered anomalies which have lower densities than that of all normal points \cite{aggarwal2016outlier,Liu2010}.

However, based on this characteristic, $k$NN anomaly detectors are unable to detect:
 \begin{enumerate}[(a)]
\item Local anomalies. This is because a local anomaly $y$ with low density relative to nearby normal (non-anomalous) instances in a region of high average density may still have a higher density than that of normal (non-anomalous) instances in regions of lower average density \cite{LOF2000}. Translating this in terms of  $k$-th NN distance, we have: $\forall_{x,z\in C, y\in A}\  \epsilon_k(x) < \epsilon_k(y) < \epsilon_k(z)$.  Here, some local anomalies (for example, points located around the boundaries of $C_1$ and $C_2$) are ranked lower than the normal points located around the sparse cluster ($C_3$), as shown in Figure \ref{fig1:a}. 
\item  
Anomalous clusters (sometimes referred to as clustered anomalies \cite{Liu2010}) are groups of points where each group is too small to be considered a normal cluster and is found in a low-density region of the space (i.e. far from the other normal clusters). They are difficult to detect when the density of each point of an anomalous cluster is higher than most normal points. A purported benefit of the $k$NN anomaly detector is that it is able to identify such anomalous clusters provided $k$ is sufficiently large (larger than the size of the anomalous group), when these clustered anomalies are sufficiently distanced from normal points \cite{aggarwal2016outlier}. By rescaling the data such that clustered anomalies could be farther to the normal points, larger $k$NN distances could be used to detect them. 
 \end{enumerate} 

\begin{figure}
\centering
  \begin{subfigure}[b]{0.346\textwidth}
  \centering\captionsetup{width=.99\linewidth}%
    \includegraphics[width=1\textwidth]{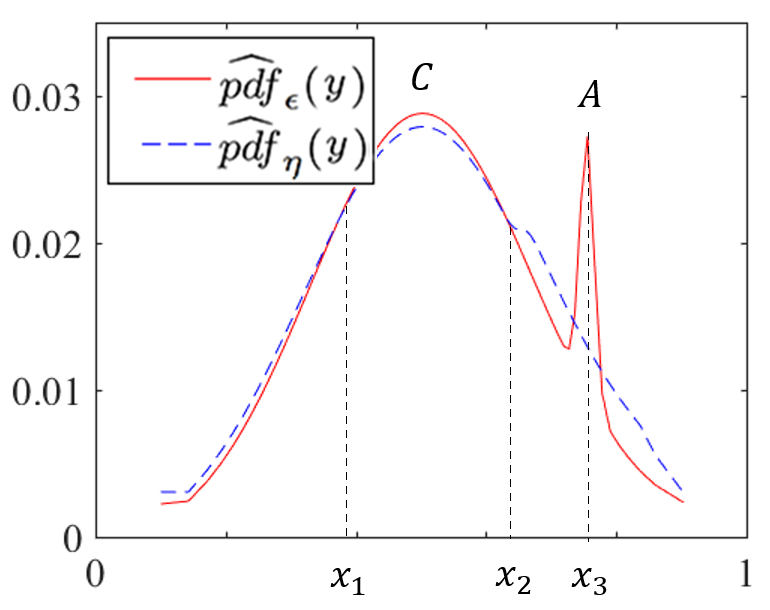}
    \caption{Original data}
    \label{fig2:a}
  \end{subfigure}  %
 \begin{subfigure}[b]{0.346\textwidth}
        \centering\captionsetup{width=.99\linewidth}%
    \includegraphics[width=1\textwidth]{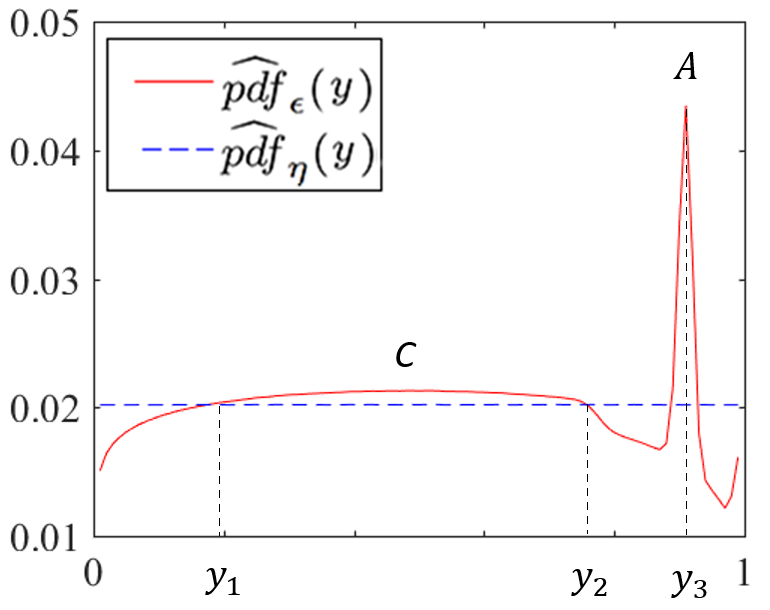}
    \caption{ReScaled data}
        \label{fig2:c}
  \end{subfigure}
   \caption{(a) A mixture of two one-dimensional Gaussian distributions where C is a normal cluster and A is an anomalous cluster; (b) Density distribution on the ReScaled data of (a), where the anomalous cluster is farther to the normal cluster centre. Note that point $x_1$, $x_2$ and $x_3$ are shifted to $y_1$, $y_2$ and $y_3$, respectively.}
    \label{fig2} 
\end{figure}

 \hspace*{1cm} 

To sum up, both density-based clustering and distance-based anomaly detectors have weaknesses when it comes to handling datasets with inhomogeneous cluster densities clusters. Rather than creating a new density estimator or modifying the existing clustering and anomaly detection algorithm procedures, we advocate transforming clusters to be more uniformly distributed (with homogeneous density) than it is in the original space such that the separation between clusters can be identified easily.

\section{Density-ratio estimation}
\label{denRatio}

Density-ratio estimation is a principled method to overcome the weakness of density-based clustering for detecting clusters with inhomogeneous densities \cite{ZHU2016983}.

The density-ratio of a point is the ratio of two density estimates calculated using the same density estimator, but with two different bandwidth settings.  

Let $pdf(\cdot; \gamma)$ and $pdf(\cdot;\lambda)$ be density estimators using kernels of bandwidth $\gamma$ and $\lambda$, respectively. Given the constraint that the denominator has a larger bandwidth than the numerator $\gamma <  \lambda$, the density ratio of $x$ is estimated as:
  \begin{equation}
   {rpdf}(x;\gamma,\lambda)  =\frac{{pdf}(x;\gamma)}{{pdf}(x;\lambda)} 
    \label{equationRpdf1}
   \end{equation}
 
We recall a lemma from \cite{ZHU2016983} regarding the density ratio:
\begin{lemma}
For any data distribution and sufficiently small values of $\gamma$ and $\lambda$ s.t. $\gamma < \lambda$, if $x$ is at a local maximum density of $\mathcal{N}(x;\lambda)$, then ${rpdf}(x;\gamma,\lambda)\geqslant 1 $;  and if $x$ is at a local minimum density of $\mathcal{N}(x;\lambda)$, then ${rpdf}(x;\gamma,\lambda)\leqslant 1$.
\label{ratioThre}
\end{lemma}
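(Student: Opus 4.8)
The plan is to read the uniform-kernel estimate $pdf(x;h)$ as a \emph{local average of the true density}: in the large-sample regime one has $pdf(x;h)=\frac{1}{V_h}\int_{B(x,h)} pdf(y)\,dy$, the mean value of the true density $pdf$ over the ball $B(x,h)=\{y:s(x,y)\leqslant h\}$, this being just $\mathbb{E}|\mathcal{N}(x;h)|/(nV_h)$ up to a sampling error that vanishes as $n\to\infty$. The lemma then becomes a comparison of this average over the small ball $B(x,\gamma)$ against the same average over the large ball $B(x,\lambda)\supseteq B(x,\gamma)$, and I would prove it by sandwiching both averages against the single number $pdf(x)$.

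For the local-maximum case, the hypothesis ``$x$ is at a local maximum density of $\mathcal{N}(x;\lambda)$'' means (in the large-sample limit) $pdf(x)=\max_{y\in B(x,\lambda)}pdf(y)$. Averaging the pointwise bound $pdf(y)\leqslant pdf(x)$ over $B(x,\lambda)$ gives at once $pdf(x;\lambda)\leqslant pdf(x)$. For the numerator I would use continuity of $pdf$ at $x$ (equivalently Lebesgue differentiation): $pdf(x;\gamma)\to pdf(x)$ as $\gamma\to 0$. Hence, once $\gamma$ is small enough, $pdf(x;\gamma)\geqslant pdf(x;\lambda)$ and therefore $rpdf(x;\gamma,\lambda)\geqslant 1$. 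The point needing care is the equality case $pdf(x;\lambda)=pdf(x)$: there $\int_{B(x,\lambda)}(pdf(x)-pdf)=0$ with a non-negative integrand, so $pdf\equiv pdf(x)$ throughout $B(x,\lambda)$ by continuity, making numerator and denominator both equal to $pdf(x)$, i.e.\ $rpdf=1$; this degenerate ``flat'' case is exactly why the conclusion must read $\geqslant$ rather than $>$. The local-minimum case is the mirror image: $pdf(x)=\min_{y\in B(x,\lambda)}pdf(y)$ forces $pdf(x;\lambda)\geqslant pdf(x)$ while $pdf(x;\gamma)\to pdf(x)$, so $rpdf(x;\gamma,\lambda)\leqslant 1$, with equality again only on a flat $\lambda$-ball.

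To make the role of ``sufficiently small $\lambda$'' transparent and to obtain a quantitative rate, I would also give a Taylor-expansion version: since $x$ is the centre of the ball, its being a maximiser of $pdf$ over $B(x,\lambda)$ forces $\nabla pdf(x)=0$ with Hessian $H\preceq 0$; isotropy of the ball kills the cross terms and yields $pdf(x;h)=pdf(x)+\frac{h^2}{2(d+2)}\operatorname{tr}(H)+o(h^2)$, whence $pdf(x;\gamma)-pdf(x;\lambda)=\frac{\operatorname{tr}(H)}{2(d+2)}(\gamma^2-\lambda^2)+o(\lambda^2)\geqslant 0$ as soon as the bandwidths are small enough for the remainder to be dominated (the only escape, $\operatorname{tr}(H)=0$ hence $H=0$, falls back on the continuity argument above).

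The main obstacle is not any single estimate but the gap between the \emph{discrete} hypothesis and the \emph{continuous} argument: $\mathcal{N}(x;\lambda)$ is a finite sample and ``local maximum density'' is phrased through the estimator, so the proof implicitly needs a large-sample reading of ``for any data distribution'', together with some care about the order of quantifiers in ``sufficiently small $\gamma$ and $\lambda$'' (it suffices that $\gamma$ be small relative to $\lambda$ and to the local oscillation of $pdf$; $\lambda$ itself may be as large as one likes provided $x$ really is extremal over all of $B(x,\lambda)$). I would also flag that dropping the qualifier ``of $\mathcal{N}(x;\lambda)$'' breaks the claim — a merely topological local extremum of $pdf$ need not be extremal over a fixed-radius ball — which is presumably the defect being repaired relative to the original lemma in \cite{ZHU2016983}.
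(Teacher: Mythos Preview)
The paper does not supply a proof of this lemma: it is stated as a corrected version of a result from \cite{ZHU2016983} and then simply used, so there is no paper proof to compare your proposal against.

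On its own merits your argument is sound. Reading the uniform-kernel estimate $pdf(x;h)$ as the average of the true density over $B(x,h)$, bounding the $\lambda$-average above by the extremal value $pdf(x)$, and then letting the $\gamma$-average converge to $pdf(x)$ via Lebesgue differentiation is the natural route; your treatment of the flat equality case $pdf(x;\lambda)=pdf(x)$ closes the only loose end and correctly explains why the conclusion is $\geqslant$ rather than $>$. The second-order Taylor version is a useful sharpening that makes the bandwidth dependence explicit and shows why both bandwidths appearing as ``sufficiently small'' is really about controlling the $o(\lambda^2)$ remainder. Your caveats---the discrete-versus-continuous reading of the hypothesis, the implicit large-sample limit, and the observation that only $\gamma$ truly needs to be small once $x$ is extremal over all of $B(x,\lambda)$---are exactly the points an honest write-up should flag, and your diagnosis of what the qualifier ``of $\mathcal{N}(x;\lambda)$'' is repairing relative to the cited original is on target.
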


Since points located at local high-density areas (almost invariably) have density-ratio  higher than  points located at local low-density areas, a global density-ratio threshold around unity can be used to identify all cluster peaks and break all paths between different clusters. Thus, based on density-ratio estimation, existing density-based clustering algorithms such as DBSCAN can identify clusters as regions of locally high density, separated by regions of locally low density.

Similarly, a density-based anomaly detector is able to detect local anomalies since their density-ratio values are lower and ranked higher than normal points with locally high densities.

\section{Scaling inhomogeneous cluster densities}
\label{cdfSca}
We now discuss methods for rescaling a dataset to tackle the problem of inhomogeneous cluster densities.

\subsection{One-dimensional CDF scaling}

Let $pdf(\cdot; \lambda)$ and $cdf(\cdot; \lambda)$ denote density and cumulative-density estimators, respectively, which are parameterised by a bandwidth of $\lambda$. Let $x'$ denote a point that has been transformed using the $cdf$ as follows:
\begin{equation}
  x' = cdf(x;\lambda) 
\end{equation}
 
As this is a probability integral transform \cite{pearson1938probability}, we then have the property:
\begin{equation} 
pdf(x';\lambda)  = pdf(cdf(x;\lambda);\lambda)= 1/n
\end{equation}

Given the data size $n$ is large, $pdf(x)$ varies slowly around $x$ and $\lambda > \gamma$, we have  \cite{ZHU2016983}: 


\begin{eqnarray}
pdf(x';\gamma) & = & \frac{1}{nV_{\gamma}}\sum_{j}\mathbf{1}(\Vert{x'-x'_j}\Vert \leqslant \gamma)  \nonumber\\
 & = &
\frac{1}{nV_{\gamma}}\sum_{j}\mathbf{1}(\Vert{cdf(x;\lambda)-cdf(x_j;\lambda)}\Vert \leqslant \gamma) \nonumber  \\
 & \approx & 
\frac{1}{nV_{\gamma}}\sum_{j}\mathbf{1}(pdf(x;\lambda)*\Vert{x-x_j}\Vert \leqslant \gamma)   \nonumber \\
& \approx &
\frac{1}{V_{\gamma}} P_{X\sim F}(pdf(x;\lambda)*\Vert{x-X}\Vert \leqslant \gamma) \nonumber \\
& = &
\frac{1}{V_{\gamma}} P_{X\sim F}(\Vert{x-X}\Vert \leqslant \frac{\gamma}{pdf(x;\lambda)}) \nonumber \\
  & \approx &
\frac{1}{V_{\gamma}} \frac{P_{X\sim F}(\Vert{x-X}\Vert \leqslant \gamma)}{pdf(x;\lambda)}     \nonumber \\ 
&\approx  &
\frac{1}{nV_{\gamma}} \frac{\sum_{j}\mathbf{1}(\Vert{x-x_j}\Vert \leqslant \gamma)}{pdf(x;\lambda)}   =  \frac{pdf(x;\gamma)}{pdf(x;\lambda)}  
 \end{eqnarray}

\noindent where  $\mathbf{1}(.)$ denotes the indicator function and $\Vert.\Vert$ is the Euclidean distance function. 

ReScale \cite{ZHU2016983} is a representative implementation algorithm based on this $cdf$ transform. Figure \ref{fig1:c} and Figure \ref{fig2:c} show ReScale rescales the data distribution on two 1-dimensional datasets, respectively. They show that clusters and anomalies are easier to identify after the application of ReScale. 

Since this $cdf$ transform can be performed on a one-dimensional dataset only, ReScale must apply the transformation to each attribute independently for a multi-dimensional dataset. 

\subsection{Multi-dimensional CDF scaling}
\label{sec_MD_CDF}

With a distance scaling method, a multi-dimensional $cdf$ transform can be achieved by simply rescaling the distances between each point and all the other instances in its local neighbourhood, (thereby considering all of the dimensions of the multi-dimensional dataset at once).

Given a point $x\in D$, the distance between $x$ and all points in its $\lambda$-neighbourhood $y\in \mathcal{N}(x;s,\lambda)$\footnote{For reasons that will become obvious shortly, we now reparameterise the neighbourhood function $\mathcal{N}(x;s,\lambda)$ to depend on the distance measure $s$.} can be rescaled using a scaling function $r(\cdot)$: 

\begin{equation}
\small
s'(x,y)=s(x,y)\times r(x;\lambda), \forall_{x,y\in D, y\in \mathcal{N}(x;s,\lambda)} 
\label{Scaling1}
\end{equation}

\noindent where  $s'(\cdot,\cdot)$ is the scaled distance of $s(\cdot,\cdot)$.

Here, the scaling function $r(x;\lambda)$ depends on both the position $x$ and size of the neighbourhood $\lambda$. It is defined as follows using the estimated density $pdf(x;s,\lambda)$ with the aim of making the density distribution within the $\lambda$-neighbourhood uniform:
\begin{eqnarray}
\small
r(x;\lambda) = \frac{m}{\lambda}\times (\frac{pdf(x;s,\lambda) \times V_{\lambda}}{n})^{\frac{1}{d}} \propto pdf(x;s,\lambda)^{\frac{1}{d}}
\label{rate1}
\end{eqnarray}
\noindent where $m = max_{x,y \in D} \ s(x,y)$ is the maximum pairwise distance in $D$.  Note that we have now reparameterised the density estimator $pdf(x;s,\lambda)$ to include the distance function s, in order to facilitate calculations below. 

With reference to the uniform distribution which has density $\frac{n}{V_m}$, where $V_m$ is the volume of the ball having radius $m$:
\begin{eqnarray}
\small
r(x;\lambda) < 1 \mbox{, if } pdf(x;s,\lambda) < \frac{n}{V_m} \\
r(x;\lambda) > 1 \mbox{, if } pdf(x;s,\lambda) > \frac{n}{V_m}
\label{rate2}
\end{eqnarray}
That is, the process rescales sparse regions to be denser by using $r(x;\lambda)<1$, and dense regions to be sparser by using $r(x;\lambda)>1$;  such that the entire dataset is approximately uniformly distributed in the scaled $\lambda$-neighbourhood. More specifically, after rescaling distances with s', the density of points in the neighbourhood of size ${\lambda_x'}={\lambda}\times r(x;\lambda)$ around $x$ is the same as the density of points across the whole dataset:

\begin{eqnarray}
\small
	{pdf} (x;s',\lambda_x') =\frac{pdf(x;s,\lambda)\times V_{\lambda}}{V_{\lambda_x'}}=\frac{pdf(x;s,\lambda)\times V_{\lambda}}{V_{\lambda}\times r(x;\lambda)^d}  \nonumber \\  =\frac{pdf(x;s,\lambda)\times V_{\lambda}}{V_{\lambda}\times (\frac{m}{\lambda}\times (\frac{pdf(x;s,\lambda) \times V_{\lambda}}{n})^{\frac{1}{d}} )^d} = \frac{n\times\lambda^d}{V_\lambda\times m^d} = \frac{n}{V_{m}}
\label{eq1}
\end{eqnarray}

\noindent Note that the above derivation is possible because (i) the scaling is isotropic about $x$ (hence the shape of the unit ball doesn't change only its size) and (ii) the uniform-kernel density estimator is local (i.e., its value depends only on points within the $\lambda_x'$ neighbourhood).  

In order to maintain the same relative ordering of distances between $x$ and all other points in the dataset, the distance between $x$ and any point outside the $\lambda$-neighbourhood $y \in D\setminus\mathcal{N}(x;s,\lambda)$  can be normalised by a simple $min$-$max$ normalisation:
\begin{eqnarray}
\small
 s'({x,y})=(s({x,y})-\lambda)\times \frac{m-\lambda_x'}{m-\lambda}+\lambda_x', \forall_{y \in D\setminus\mathcal{N}(x;s,\lambda)} 
\label{Scaling2}
\end{eqnarray}

The implementation of DScale \cite{DSCALE}, which is a representative algorithm based on this multi-dimensional CDF scaling, is provided in  \ref{appA}.

An observation about the density on the rescaled distance is given as follows:

\begin{observation}
\label{ob1}
The density ${pdf}(x;s', \gamma_x')$ with the rescaled distance $s'$ is approximately proportional to the density-ratio $\frac{pdf(x;s, \gamma)}{pdf(x;s, \lambda)}$ in terms of the original distance $s$ within the $\lambda$-neighbourhood of $x$.
\end{observation}

\begin{proof} 

\begin{eqnarray}
\small
{pdf}(x;s',\gamma_x')  \approx   \frac{pdf(x;s,\gamma)\times V_{\gamma}}{V_{\gamma_x'}} \nonumber
	 = 
\frac{pdf(x;s,{\gamma})\times V_{\gamma}}{V_{\gamma}\times r(x;\lambda)^d}
\nonumber  \\\nonumber	 =  \frac{pdf(x;s,{\gamma})}{\frac{m^d}{\lambda^d}\times \frac{pdf(x;s,{\lambda})\times V_{\lambda}}{n}} 	 = 
\frac{n}{V_m} \times \frac{pdf(x;s,{\gamma})}{pdf(x;s,{\lambda})} \nonumber  \\  \propto  \frac{pdf(x;s,{\gamma})}{pdf(x;s,{\lambda})} \nonumber
\end{eqnarray}

\noindent where ${\gamma_x'}={\gamma}\times r(x;\lambda)$ and $\gamma<\lambda$.  
\end{proof} 

Note that the above observation is only valid within the $\lambda$-neighbourhood of $x$, and when each $x\in D$ is treated independently. In other words, the $cdf$ transform is only valid locally.
In addition, the rescaled distance is asymmetric, i.e.,  $s'(x,y) \ne s'(y,x)$ when  $r(x;\lambda) \ne r(y;\lambda)$.

To be a valid $cdf$ transform globally for the entire dataset, we propose in this paper to perform an iterative process that involves two steps: distance rescaling and point shifting.

\section{CDF Transform-and-Shift}
\label{sec_CDF-TS}

Here we present our proposed method, which we call CDF Transform-and-Shift (CDF-TS) because the key process is motivated by and based on a multi-dimensional CDF scaling.

A CDF transform on a dataset produces a uniformly distributed dataset \cite{shi2000reducing}. However, an entire dataset of a single uniform distribution destroys the cluster structures in the dataset which is of no use for data mining.

The proposed method aims to reduce the density variation of a given dataset, effectively equalises the density of all clusters (of high-density regions) as well as equalises the density of low-density regions (between any two clusters). The latter ensures that the cluster structure in the dataset is preserved; while the former conforms to the implicit assumption of existing density-based algorithms. These enable each of the existing algorithms to identify all clusters that would be otherwise impossible.

This is achieved through a CDF transform-and-shift process. While the same CDF transform process as DScale is used, the additional ``shift'' ensures that (i) the transformed-and-shifted dataset becomes approximately uniformly distributed in the scaled $\lambda$-neighbourhood; and (ii) the standard Euclidean distance can then be used to measure distances between any two transformed-and-shifted points. These advantages are not available in DScale which relies on a rescaled distance which is non-metric and asymmetric.

The CDF transform-and-shift process is described as follows.

Consider two points $x,y \in D$. In order to make the distribution around point $x$ more uniform, we wish to rescale (expand or contract) the distance between $x$ and $y$ to be $s'(x,y)$ as defined by Equations \ref{Scaling1} and \ref{Scaling2}. We do this by translating $y$ to a new point denoted $y_x'$ which lies along the direction $(y-x)$ as follows:
\begin{equation}
y_x' = x + \frac{s'(x,y)}{s(x,y)}(y-x)
\label{e15}
\end{equation}
Note that the distance between $x$ and the newly transformed point $y_x'$ satisfies the rescaled distance requirement: 
\begin{equation}
s(x,y_x') = ||y_x'-x|| = \frac{s'(x,y)}{s(x,y)}||(y-x)|| = s'(x,y)
\end{equation}

So far we have considered the effect of translating point $y$ to $y_x'$ wtih respect to $x$ only. 
To consider the effects with respect to \emph{all} points $D$,
point $y$ needs to be transformed by the average of the above translations with respect to all points in $D$: 
\begin{equation}
\bar{y}'=\frac{1}{n}\sum_{x\in D} y_x' = \frac{1}{n}\sum_{x\in D} \left[ x + \frac{s'(x,y)}{s(x,y)}(y-x) \right]
\label{e17}
\end{equation}

We make the following observation regarding the final shifted dataset, denoted $D'=\{\bar{y}'|y\in D\}$:
\begin{observation} 
Given a dataset, if there is a sufficiently small bandwidth $\lambda\!<\!m$ such that for every point $x\!\in\!D$ the density over its neighbourhood $\mathcal{N}(x;s,\lambda)$ varies relatively slowly,
then the variance in $\lambda$-neighbourhood density should reduce as a result of the data transformation, i.e.:
\begin{equation}
\mathbb{VAR}_{\bar{x}'\in D'}[pdf(\bar{x}';s,\lambda)] < \mathbb{VAR}_{x\in D}[pdf(x;s,\lambda)] 
\end{equation}
We provide an intuitive argument supporting this observation in  \ref{appB}.\footnote{It is not clear whether this condition (reduction in $pdf$ variance for small $\lambda$) will invariably hold for all possible datasets. We leave the investigation as to whether the property holds for all possible data configurations to future work.} 
\label{Obs2}
\end{observation}
 
Furthermore, the transform-and-shift process can be repeated multiple times iteratively to further reduce the variance in the $\lambda$-neighbourhood density estimates.

Although CDF-TS takes the average of scaling effects from all data points, the $\lambda$-neighbourhood densities of all points approximate to be uniformly distributed. Since Equations \ref{e15} and \ref{e17} are based on the linear transformation within and outside the $\lambda$-neighbourhood of a data point, the local density-variation inside the $\lambda$-neighbourhood can be preserved in each shift process, i.e., the locally low-density gaps between clusters become globally low-density gap, as shown in Observation \ref{ob1}.

The key parameter $\lambda$ is very important for this shift process. In case a non-convergence situation arises from an inappropriate $\lambda$ setting, we set a condition where the iteration process stops: when a fixed iteration limit is reached, or when the total movement of points between iterations ($D$ to $D'$) falls below a threshold $\delta$. Here we use a Manhattan distance to measure the movement in our experiments, i.e.,
\begin{equation} 
\sum_{x \in D,\ \bar{x}' \in D'}{\vert x- \bar{x}' \vert} \le \delta
\label{obj}
\end{equation} 
\noindent

Figure \ref{hard} illustrates the effects of CDF-TS on a two-dimensional dataset with different iterations. It shows that the original clusters with different densities become increasing uniform with less density variation as the number of iterations increases. In addition, the gaps between different clusters become more obvious after iterations.

\begin{figure*}[!tb]
\centering
  \begin{subfigure}[b]{0.24\textwidth}
 \centering\captionsetup{width=.9\linewidth}%
    \includegraphics[width=1.37in]{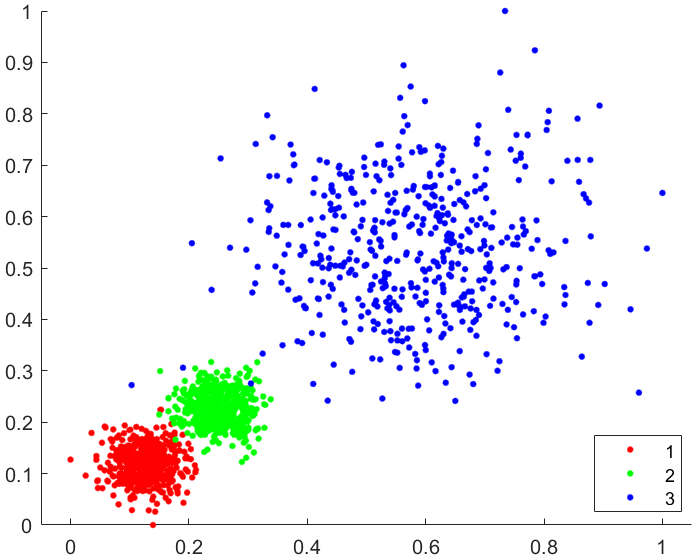}
    \caption{A two-dimensional data with $Std(x)=4.29$}
    \label{hard:a}
  \end{subfigure}  %
  \begin{subfigure}[b]{0.24\textwidth}
 \centering\captionsetup{width=.9\linewidth}%
    \includegraphics[width=1.37in]{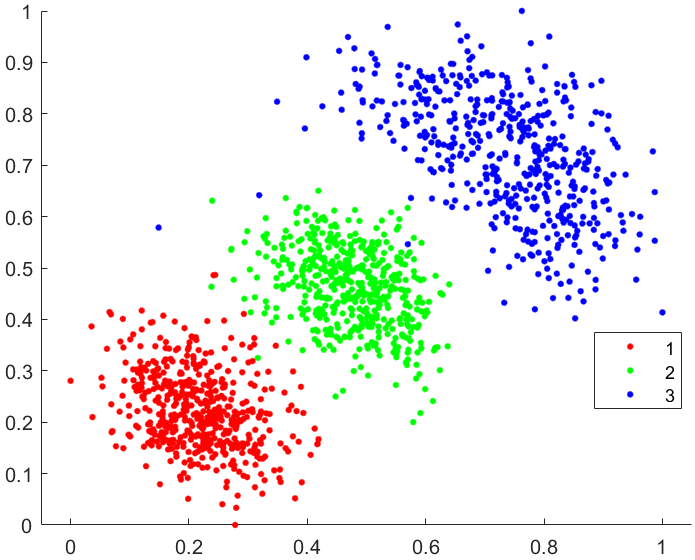}
    \caption{After 1 iteration $Std(x)=1.65$}
        \label{data:n}
  \end{subfigure}
  \begin{subfigure}[b]{0.24\textwidth}
 \centering\captionsetup{width=.9\linewidth}%
    \includegraphics[width=1.37in]{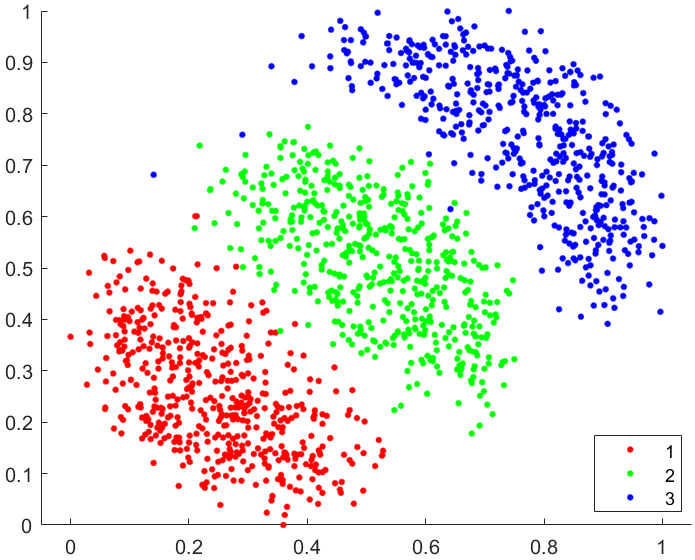}
    \caption{After 3 iterations $Std(x)=0.62$}
    \label{hard:c}
  \end{subfigure}  %
  \begin{subfigure}[b]{0.24\textwidth}
 \centering\captionsetup{width=.9\linewidth}%
    \includegraphics[width=1.37in]{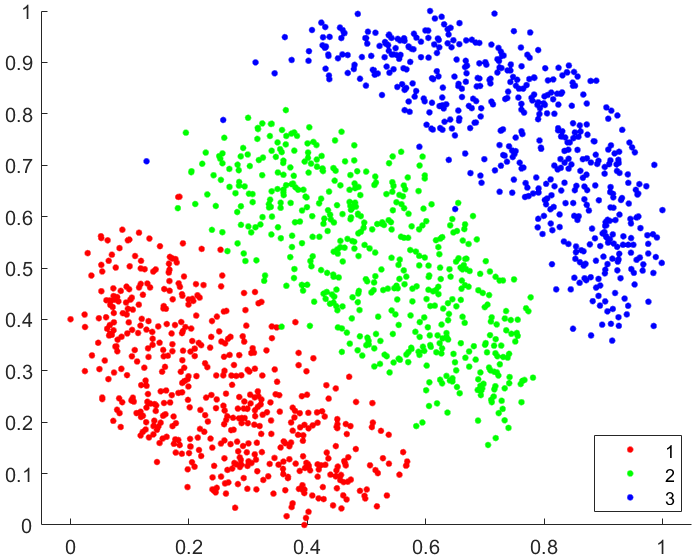}
    \caption{After 5 iterations $Std(x)=0.42$}
        \label{hard:d}
  \end{subfigure}
\caption{Scatter plots for illustrating the effects of CDF-TS on a two-dimensional data with $\lambda=0.1$. $Std(x)=\sigma(	\widehat{pdf}_{\epsilon}(x\in D))$ represents the standard deviation of the density with $\epsilon=0.1$.} 
    \label{hard} 
\end{figure*}

The implementation of CDF-TS  is shown in Algorithm~\ref{FScaling}. The DScale algorithm used in step 6 is provided in  \ref{appA}. Algorithm~\ref{FScaling} requires two parameters $\lambda$ and $\delta$.  
Note that, due to the shifts in each iteration, the value ranges of attributes of the shifted dataset $D'$ are likely to be different from those of the original dataset $D$. We can use a min-max normalisation \cite{aksoy2001feature} on each attribute to keep the values in the same fixed range at the end of each iteration. In addition, we set the maximum number of iterations to prevent the unconverged situation. 

\begin{algorithm}[!htb]
\small
	\caption{CDF-TS($D$, $\lambda$, $\delta$)}
	\begin{algorithmic}[1] 
		\Require $D$ - input data matrix ($n  \times d$ matrix); $\lambda$ - bandwidth parameter; $\delta$ - threshold for the total shifted distance.
		\Ensure $D'$ - data matrix after rescaling and point shifting. 
		\State Normalising $D$ using min-max normalisation
		\State $\Delta=\infty$
		\State $t=1$
		\While {$\Delta>\delta$ and $t<100$}      
    	\State $S \leftarrow$ Calculating the distance matrix for $D$	
    	\State $S' \leftarrow$ DScale($S$, $\lambda$, $d$)  
        \For {each $z \in D$ (where $z$ is a reference point)}      
		\State  $D_z \leftarrow$ Shift every point $x\in D$  to $x_z'$ in the direction of $z$ to $x$ with magnitude $S'[z,x]$
		\EndFor   
		\State $D'\leftarrow \frac{1}{n}\sum_{z \in D} D_z$	
        \State Normalising $D'$ using min-max normalisation
 		\State $\Delta=\frac{1}{nd}\sum_{i,j}\vert D[i,j]-D'[i,j]\vert$
 		\State $t=t+1$
		\State $D=D'$ 
		\EndWhile             
		\State \Return $D'$
	\end{algorithmic}
	\label{FScaling} 
\end{algorithm}

To demonstrate the effects of different density-ratio based rescaling methods (ReScale \cite{ZHU2016983}, DScale \cite{DSCALE} and CDF-TS), we apply them for the same image segmentation task as shown in Table \ref{seg1}. Table \ref{seg2} shows the results of applying the clustering algorithm DP \cite{rodriguez2014clustering} to produce three clusters on the image shown in Figure~\ref{lab:aa}. The scatter plots in LAB space in Table \ref{seg2} show that the three clusters become more homogeneous distributed. Although the density variation of CDF-TS is not the smallest, the gaps between cluster boundaries are much larger for CDF-TS than the other scaling methods. Thus, DP with CDF-TS yields the best clustering result, shown in Table \ref{seg2}. This also shows that density equalisation is not the only aim; and by itself, density equalisation does not achieve the intended outcome of identifying all clusters in the dataset.

\begin{table} [!htb] 
  \renewcommand{\arraystretch}{1.2}
 \setlength{\tabcolsep}{3.5pt}
\centering
\caption{DP's image segmentation on the image shown in Figure~\ref{lab:aa}. The colours in the scatter plots indicate the three clusters identified by DP using each of the four methods. The scatter plots of DScale and No Scaling are based on the original LAB attributes; and the other two are based on the transformed attributes. Note that the standard deviation of the density distribution of points in the LAB space are 12.82, 7.03, 3.19 and 7.85 for the four sub-figures, respectively. The density estimation is based on Equation \ref{epsN} with $\epsilon=0.1$.} 
  \begin{tabular}{|c|c|ccc|}
    \hline
     & LAB & Cluster 1  & Cluster 2   & Cluster 3  \\
      \hline
       \begin{turn}{90}  \ \ \   No Scaling \end{turn}&      \includegraphics[width=1.37in]{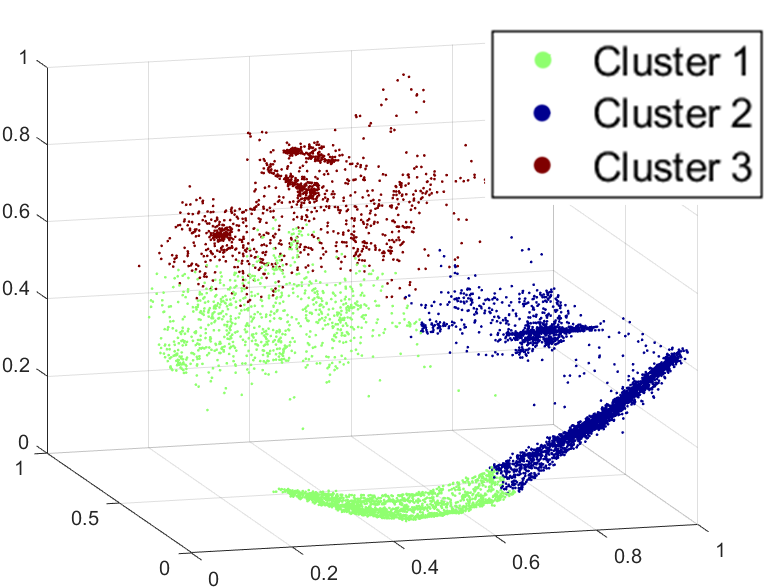} &  \includegraphics[width=1.37in]{pic/DIS22.png} &
      \includegraphics[width=1.37in]{pic/DIS21.png} &
      \includegraphics[width=1.37in]{pic/DIS23.png}  \\      \hdashline
       \begin{turn}{90} \ \ \ \  ReScale \end{turn}&      \includegraphics[width=1.37in]{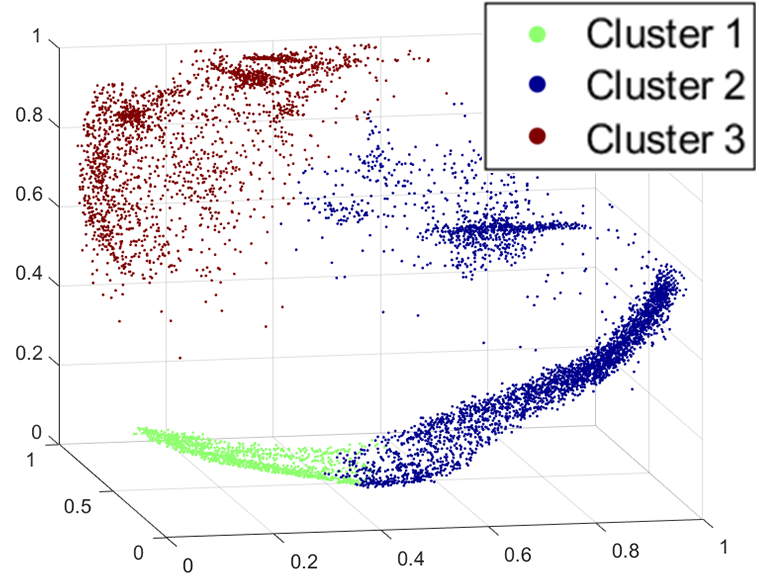} & \includegraphics[width=1.37in]{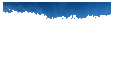} &
      \includegraphics[width=1.37in]{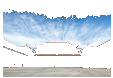} &
      \includegraphics[width=1.37in]{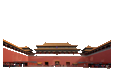}  \\      \hdashline
       \begin{turn}{90} \ \ \ \ \ \ DScale \end{turn}&      \includegraphics[width=1.37in]{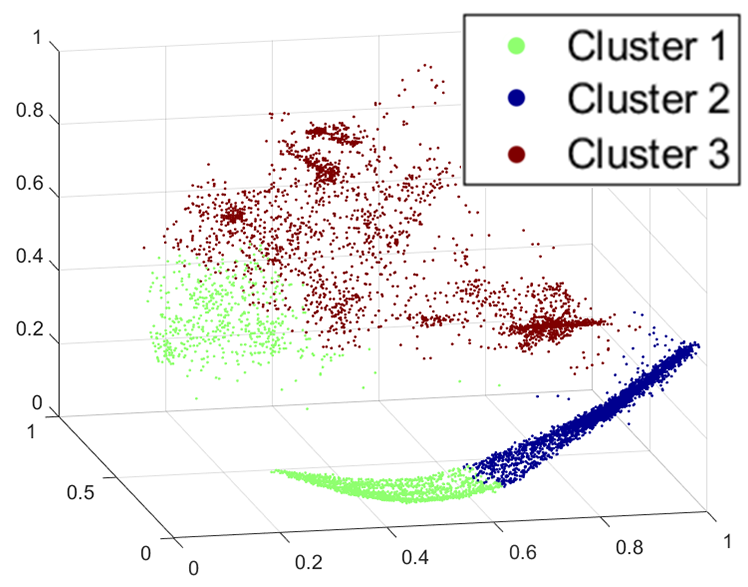} & \includegraphics[width=1.37in]{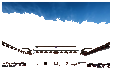} &
      \includegraphics[width=1.37in]{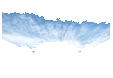} &
      \includegraphics[width=1.37in]{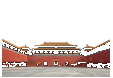}  \\      \hdashline
       \begin{turn}{90} \ \ \ \ \ \  CDF-TS \end{turn}&      \includegraphics[width=1.37in]{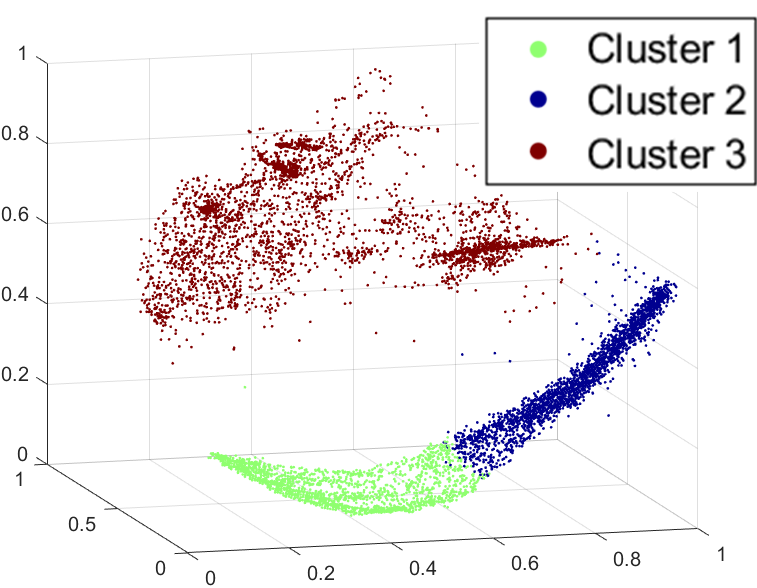} & \includegraphics[width=1.37in]{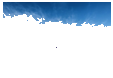} &
      \includegraphics[width=1.37in]{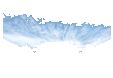} &
      \includegraphics[width=1.37in]{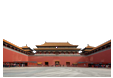}  \\ 
      \hline 
  \end{tabular}
\label{seg2}
\end{table} 
 
\subsection{Density estimators applicable to CDF-TS}

The following two types of density estimators, with fixed-size and variable-size bandwidths of the uniform kernel, are applicable to CDF-TS to do the CDF scaling:

\noindent
(a) {\bf $\epsilon$-neighbourhood density estimator}:
\[
\widehat{pdf}_{\epsilon}(x)=\frac{\vert \lbrace y \in D ~|~ s(x,y) \leqslant \epsilon \rbrace \vert}{nV_{\epsilon}}
\]
\noindent where $V_{\epsilon}$ is the volume of the ball with radius $\epsilon$.

When this density estimator is used in CDF-TS, the $\lambda$-neighbourhood described in Sections \ref{sec_MD_CDF} and \ref{sec_CDF-TS}, i.e., $\mathcal{N}(x;s,\lambda)= \lbrace y \in D ~|~ s(x,y) \leqslant \epsilon \rbrace$, where $\lambda=\epsilon$, denoting the fixed-size bandwidth uniform kernel.

\vspace{3mm}
\noindent
(b) {\bf $k$-th nearest neighbour density estimator}:
\[
\widehat{pdf}(x;\epsilon_k(x))= \frac{k}{nV_{s(x,x_k)}} \propto (\frac{1}{s(x,x_k)})^d
\]
\noindent where $x_k$ is the $k$-th nearest neighbour of $x$; $\epsilon_k(x)$ is $k$-th nearest neighbour distance of $x$.

When this density estimator is used in CDF-TS, the neighbourhood size $\lambda$ becomes dependent on the location $x$ and the distribution of surrounding points. We denote $\lambda(x)_k = s(x,x_k)$ as the distance to the $k$-th nearest neighbour of $x$. In this case the density is simply calculated over the neighbourhood: $\mathcal{N}(x;s,\lambda(x)_k) =\lbrace y \in D ~|~ s(x,y) \leqslant s(x,x_k) \rbrace$, denoting the variable-size bandwidth uniform kernel, i.e., small in dense region and large in sparse region. Note that, in this circumstance, the density of each shifted point also becomes more uniform w.r.t their surrounding.


\vspace{3mm}
In the following experiments, we employ the same density estimators used in DBSCAN and DP (in clustering) and $k$NN anomaly detector in CDF-TS to transform $D$ to $D'$, i.e., $\epsilon$-neighbourhood density estimator is used in CDF-TS for clustering; and $k$-th nearest neighbour density estimator is used in CDF-TS for anomaly detection.

\section{Empirical Evaluation}
\label{sec_result}

This section presents experiments designed to evaluate the effectiveness of CDF-TS.  

All algorithms used in our experiments were implemented in Matlab R2020a (the CDF-TS source code is available at \url{https://sourceforge.net/p/cdf-ts/}). The experiments were run on a machine with eight cores CPU (Intel Core i7-7820X @ 3.60GHz), 32GB memory and a 2560 CUDA cores GPU (GeForce GTX 1080). All datasets were normalised using the $min$-$max$ normalisation to yield each attribute to be in [0,1] before the experiments began.\footnote{Normalisation is a standard pre-processing step, necessary to prevent variables with large range dominating the distance calculation. Better obviously, would be to have users with domain knowledge set the scale of each variable such that the relative importance of each feature in the distance calculation is appropriate. However, in the absence of such domain knowledge, we default to standard preprocessing techniques.}

For clustering, we used two artificial datasets and 13 real-world datasets with different data sizes and dimensions from the UCI Machine Learning Repository \cite{Dua:2017}.\footnote{Many UCI datasets are originally used for ``classification'' tasks since they have the true class labels. For clustering, there is no unbiased method to manually relabel these real-world dataset without the ground truth. This needs to be handled by community as a whole. To follow previous researches in the clustering area \cite{HDBSCAN2013,lotfi2020density,hou2020density}, we assume that the clusters in these datasets are consistent with their class labels.}  
Table~\ref{dataset} presents the data properties of the datasets. 3L is a 2-dimensional data containing three elongated clusters with different densities, as shown in Figure \ref{data:a}. 4C dataset is a 2-dimensional dataset containing four clusters with different densities (three Gaussian clusters and one elongated cluster), as shown in Figure \ref{data:b}. Note that DBSCAN is unable to correctly identify all clusters in both of these datasets because they do not satisfy the condition specified in Equation \ref{eqn_condition}. Furthermore, clusters in 3L dataset significantly overlap on individual attribute projections, which violates the requirement of ReScale that  one-dimensional projections allow for the identification of the density peaks of each cluster.

\begin{table}[!tb]
 \centering
  \caption{Properties of datasets used for clustering}
    \begin{tabular}{|c|ccc|}
    \hline
    Dataset & Data Size & \#Dimensions & \#Clusters \\
    \hline
     Pendig & 10992  & 16  &  10  \\
      Segment & 2310  & 19    & 7  \\
      Mice  & 1080  & 83    & 8 \\
      Biodeg & 1055  & 41    & 2 \\
      ILPD  & 579   & 9     & 2 \\
      ForestType & 523   & 27    & 4 \\
      Wilt  & 500   & 5     & 2 \\
      Musk  & 476   & 166   & 2 \\
      Libras & 360   & 90    & 15 \\
      Dermatology & 358   & 34    & 6 \\
      Haberman & 306   & 3     & 2 \\
      Seeds & 210   & 7     & 3 \\
      Wine  & 178   & 13    & 3 \\
    \hdashline
    3L    & 560   & 2     & 3 \\
    4C    & 1250  & 2     & 4 \\
    \hline
    \end{tabular}%
  \label{dataset}%
\end{table}%

\begin{figure}[!tb]
\centering
  \begin{subfigure}[b]{0.34\textwidth}
  \centering
    \includegraphics[width=1.7in]{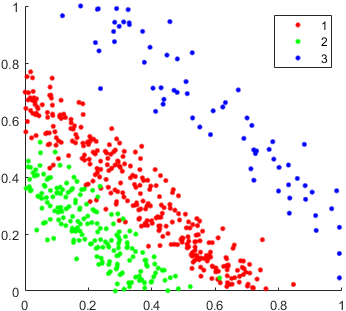}
    \caption{3L data distribution}
    \label{data:a}
  \end{subfigure}  %
  \begin{subfigure}[b]{0.324\textwidth}
  \centering
    \includegraphics[width=1.7in]{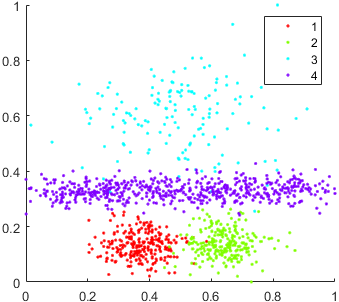}
    \caption{4C data distribution}
        \label{data:b}
  \end{subfigure}
\caption{(a) The scatter plot of a two-dimensional data containing three elongated clusters. (b) The scatter plot of a two-dimensional data containing four clusters.} 
    \label{data} 
\end{figure}

For anomaly detection, we compared the anomaly detection performance on two synthetic datasets with anomalous clusters and 10 real-world benchmark datasets\footnote{ Velocity, Ant and Tomcat are from \url{http://openscience.us/repo/defect/ck/} and others are from UCI Machine Learning Repository \cite{Dua:2017}. For Mfeat and Vowel, digit 0 and label 1 are assigned as the anomaly class, respectively. For Dermatology, class 6 is assigned to anomaly. For all other datasets, we assign the most-represented class as “normal” and all other classes as “anomaly". }. The data size, dimensions and percentage of anomalies are shown in Table~\ref{anoData}. Both the Syn 1 and Syn 2 datasets contain clusters of anomalies. Their data distributions are shown in Figure~\ref{syn}.

\begin{table}[!htb]
  \centering
  \caption{Properties of datasets used for anomaly detection}
    \begin{tabular}{|c|ccc|}
    \hline
    Dataset  & Data Size & \#Dimensions &  \% Anomaly \\    \hline
    AnnThyroid & 7200  & 6     & 7.4\%   \\
    Pageblocks & 5473  & 10    & 10.2\%  \\
    Tomcat & 858   & 20    & 9.0\%   \\
    Ant   & 745   & 20    & 22.3\% \\
    BloodDonation & 604   & 4     & 5.6\% \\
    Vowel & 528   & 10    & 9.1\% \\
    Mfeat & 410   & 649   & 2.4\% \\
    Dermatology & 358   & 34    & 5.6\% \\
    Balance & 302   & 4     & 4.6\% \\
    Velocity & 229   & 20    & 34.1\%  \\
      \hdashline
  Syn 1 & 520   & 2     & 3.9\% \\
  Syn 2 & 860   & 1     & 7.0\% \\
    \hline
    \end{tabular}%
  \label{anoData}%
\end{table}%

\begin{figure}[!htb]
\centering
  \begin{subfigure}[b]{0.4\textwidth}
  \centering
    \includegraphics[width=0.8\textwidth]{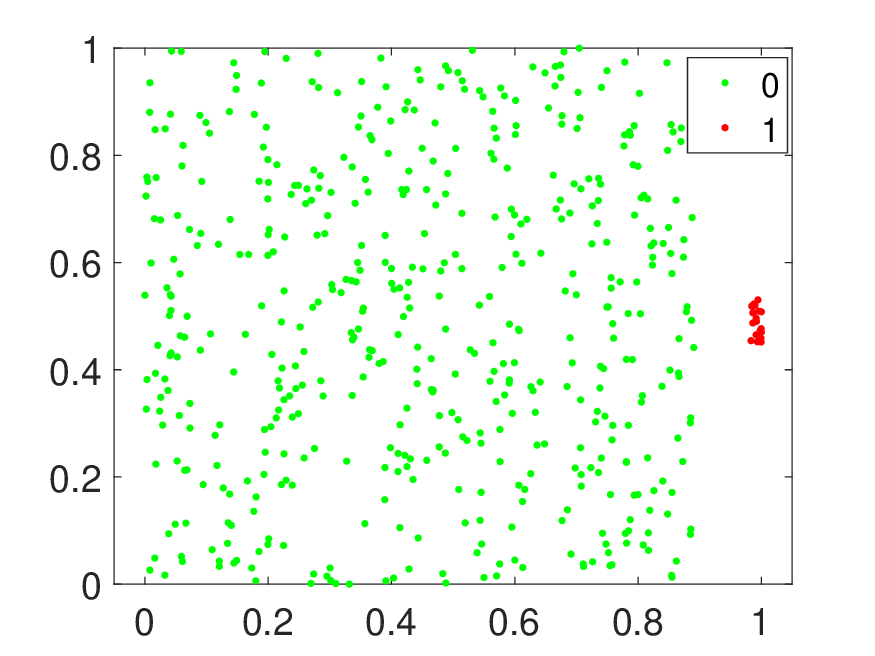}
    \caption{The scatter plot of Syn 1 dataset}
    \label{syn:a}
  \end{subfigure}  \hspace{5mm}%
  \begin{subfigure}[b]{0.4\textwidth}
  \centering
    \includegraphics[width=0.8\textwidth]{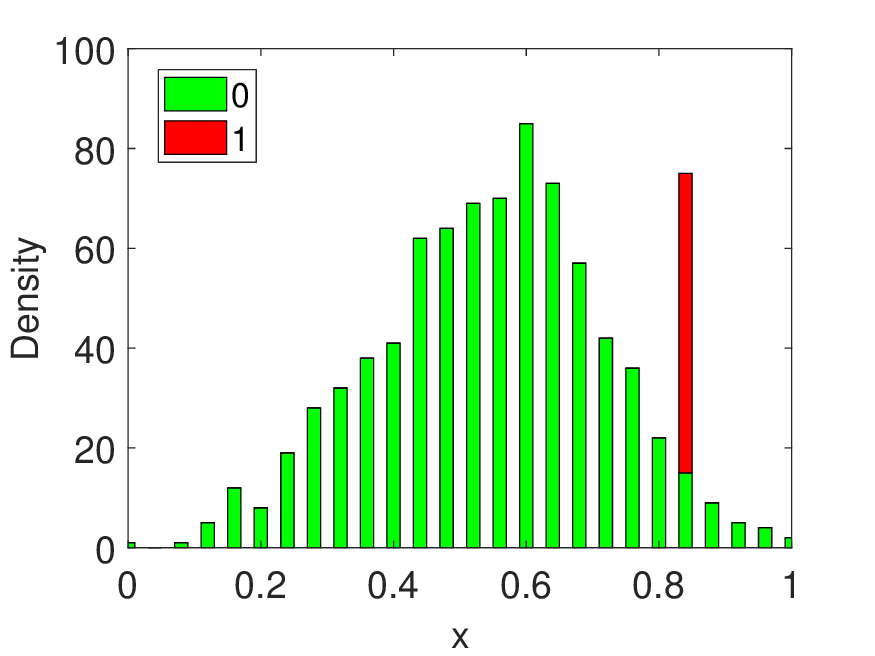}
    \caption{The stacked histogram on Syn 2 dataset}
        \label{syn:b}
  \end{subfigure}
\caption{Data distributions of the Syn 1 and Syn 2 datasets, where the red colour indicates the anomaly.} 
    \label{syn} 
\end{figure}


\subsection{Clustering}
\label{bestClu}
 
In this section, we compare CDF-TS with ReScale and DScale using four existing density-based clustering algorithms, i.e., DBSCAN \cite{ester1996density}, DP \cite{rodriguez2014clustering} , LGD \cite{li2019local} and EC \cite{wang2020extreme}, in terms of F-measure \cite{Fmeasure}: given a clustering result, we calculate the precision score $P_{i}$ and the recall score $R_{i}$ for each cluster $C_{i}$ based on the confusion matrix, and then the F-measure score of $C_{i}$ is the harmonic mean of $P_{i}$ and $R_{i}$. The overall F-measure score is the unweighted (macro) average over all clusters: F-measure$=\frac{1}{\varsigma}\sum_{i=1}^{\varsigma}\frac{2P_{i}R_{i}}{P_{i}+R_{i}}$.\footnote{We use the Hungarian algorithm \cite{kuhn1955hungarian} to search the optimal match between the clustering results and true clusters. The unmatched clusters/instances will be treated as noise which affects the recall score in the F-measure calculation. It is worth noting that other evaluation measures such as purity, Normalised Mutual Information (NMI) \cite{strehl2002cluster} and Adjusted Mutual Information (AMI) \cite{vinh2010information} only take into account the points assigned to clusters and do not consider noise. A clustering algorithm which assigns the majority of the points to noise could potentially result in a misleadingly high clustering performance. Thus, the F-measure is more suitable than purity or AMI in assessing the clustering performance of density-based clustering when there are points assigned to noise.   We have provided the AMI results of different versions of DP and LGD clustering in  \ref{appC}, because both algorithms can generate cluster labels for all points. The AMI results also show similar trends to Table \ref{best}, i.e., CDF-TS version is significantly better than other versions}

We report the best clustering performance after a parameter search for each algorithm. Table \ref{para} lists the parameters and their search ranges for each algorithm. $\psi$ in ReScale controls the precision of $\widehat{cdf}_{\lambda}(x)$, i.e., the number of intervals used for estimating $\widehat{cdf}_{\lambda}(x)$. We set $\delta=0.015$ as the default value for CDF-TS.

\begin{table}[!htb]
 \renewcommand{\arraystretch}{0.9}
 \setlength{\tabcolsep}{4.pt}
  \centering
  \caption{Parameters and their search ranges. 
  The search ranges of $\psi$ and $\lambda$ are as used by Zhu et. al. \cite{ZHU2016983}.  The parameter search ranges for LGD follow the original paper \cite{li2019local}}. 
    \begin{tabular}{|c|c|}
    \hline
    Algorithm & Parameters with search ranges \\
    \hline
    DBSCAN & $Minpts \in \{ 2,3,...,10 \}$; $\epsilon \in [0,1]$ \\
    DP & $k \in \{ 2,3,...,20 \}$; $\epsilon \in [0,1]$ \\
    EC &  $\epsilon \in [0,1]$ \\
    LGD  & $k \in \{ 2,4, 6, 10, 15, 20 \}$; $\tau \in [0.5, 0.52, 0.56, 0.62]$; $c=\#clusters$ \\
\hdashline 
ReScale & $\psi=100  $; $\lambda\in \lbrace0.1,0.2,..., 0.5\rbrace$\\
DScale &   $\lambda \in \lbrace0.1,0.2,..., 0.5\rbrace$\\
CDF-TS &   $\lambda\in \lbrace0.1,0.2,..., 0.5\rbrace$; $\delta =0.015$ \\
    \hline
    \end{tabular}%
  \label{para}%
\end{table}%

Table \ref{best} shows the best F-measures for DBSCAN, DP, EC, LGD and their  ReScale, DScale and CDF-TS versions. The average F-measures, showed in the second last row, reveal that CDF-TS improves the clustering performance of all clustering algorithms with a larger performance gap than both ReScale and DScale. In addition, CDF-TS is the best performer on many more datasets than other contenders (shown in the last row of Table \ref{best}.)

\begin{landscape}
  \begin{table}[!p]
  \renewcommand{\arraystretch}{1.2}
 \setlength{\tabcolsep}{2.6pt}
    \centering
    \caption{ The best F-measure for DBSCAN, DP, EC, LGD and their ReScale, DScale and CDF-TS versions. For each clustering algorithm, the best performer in each dataset is boldfaced. Ori, ReS, and DS represent the Original algorithm, ReScale, and DScale respectively.} 
      \begin{tabular}{|c|cccc|cccc|cccc|cccc|}
      \hline
      \multirow{2}[3]{*}{Data} & \multicolumn{4}{c|}{DBSCAN}   & \multicolumn{4}{c|}{DP}       & \multicolumn{4}{c|}{EC}       & \multicolumn{4}{c|}{LGD} \\
\cline{2-17}            & Orig  & ReS   & DS    & CDF-TS  & Orig  & ReS   & DS    & CDF-TS  & Orig  & ReS   & DS    & CDF-TS  & Orig  & ReS   & DS    & CDF-TS  \\
   \hline 
      Pendig & 0.70  & 0.78  & 0.74  & \textbf{0.80} & 0.79  & 0.82  & 0.82  & \textbf{0.84} & 0.79  & 0.84  & 0.84  & \textbf{0.844} & 0.83  & 0.878 & 0.87  & \textbf{0.88} \\
      Segment & 0.59  & 0.62  & 0.61  & \textbf{0.67} & 0.78  & 0.77  & 0.80  & \textbf{0.84} & 0.57  & 0.66  & \textbf{0.69} & 0.66  & 0.76  & 0.73  & 0.76  & \textbf{0.80} \\
      Mice  & 0.99  & 0.99  & 0.99  & \textbf{0.993} & \textbf{1.00} & \textbf{1.00} & \textbf{1.00} & \textbf{1.00} & 0.98  & 0.98  & \textbf{0.99} & \textbf{0.99} & \textbf{1.00} & \textbf{1.00} & \textbf{1.00} & \textbf{1.00} \\
      Biodeg & 0.45  & 0.44  & 0.47  & \textbf{0.52} & 0.72  & 0.74  & 0.73  & \textbf{0.76} & 0.45  & 0.44  & 0.61  & \textbf{0.70} & 0.68  & 0.70  & 0.71  & \textbf{0.73} \\
      ILPD  & 0.41  & 0.42  & \textbf{0.56} & 0.52  & 0.60  & 0.63  & 0.62  & \textbf{0.64} & 0.49  & 0.49  & \textbf{0.58} & 0.57  & 0.57  & 0.62  & 0.57  & \textbf{0.600} \\
      ForestType & 0.27  & 0.51  & 0.48  & \textbf{0.65} & 0.69  & 0.83  & 0.70  & \textbf{0.85} & 0.28  & 0.66  & 0.55  & \textbf{0.84} & 0.63  & 0.82  & 0.80  & \textbf{0.84} \\
      Wilt  & 0.38  & 0.39  & 0.54  & \textbf{0.55} & 0.54  & 0.68  & 0.54  & \textbf{0.74} & 0.44  & 0.53  & 0.49  & \textbf{0.67} & 0.53  & 0.56  & 0.54  & \textbf{0.71} \\
      Musk  & 0.51  & 0.52  & 0.51  & \textbf{0.53} & 0.55  & 0.58  & 0.61  & \textbf{0.62} & 0.51  & 0.53  & 0.54  & \textbf{0.53} & 0.60  & 0.61  & 0.63  & \textbf{0.586} \\
      Libras & 0.41  & 0.44  & 0.46  & \textbf{0.49} & 0.52  & 0.52  & 0.52  & \textbf{0.53} & 0.46  & 0.46  & 0.49  & \textbf{0.49} & 0.48  & 0.51  & 0.49  & \textbf{0.54} \\
      Dermatology & 0.52  & 0.73  & 0.74  & \textbf{0.83} & 0.91  & \textbf{0.97} & 0.91  & 0.96  & 0.73  & 0.76  & 0.75  & \textbf{0.87} & 0.92  & \textbf{0.97} & 0.94  & 0.96 \\
      Haberman & 0.47  & 0.64  & 0.59  & \textbf{0.66} & 0.56  & 0.63  & 0.58  & \textbf{0.67} & 0.49  & 0.58  & 0.54  & \textbf{0.66} & 0.53  & \textbf{0.67} & 0.56  & 0.664 \\
      Seeds & 0.75  & \textbf{0.88} & 0.85  & 0.83  & 0.91  & 0.92  & 0.92  & \textbf{0.94} & 0.90  & 0.90  & 0.91  & \textbf{0.88} & 0.91  & 0.91  & 0.91  & \textbf{0.91} \\
      Wine  & 0.64  & 0.86  & 0.80  & \textbf{0.90} & 0.93  & 0.95  & 0.96  & \textbf{0.962} & 0.84  & \textbf{0.92} & 0.90  & 0.88  & 0.92  & 0.94  & 0.92  & \textbf{0.96} \\ \hdashline
      3L    & 0.59  & 0.63  & \textbf{0.90} & 0.88  & 0.82  & 0.81  & 0.86  & \textbf{0.89} & 0.76  & \textbf{0.79} & 0.70  & 0.73  & 0.71  & 0.71  & 0.71  & \textbf{0.70} \\
      4C    & 0.71  & 0.90  & 0.92  & \textbf{0.95} & 0.87  & 0.92  & \textbf{0.95} & 0.91  & 0.86  & 0.88  & 0.88  & \textbf{0.883} & 0.95  & \textbf{0.961} & 0.95  & 0.96  \\
      \hline
      Average & 0.56  & 0.65  & 0.68  &  {0.72} & 0.75  & 0.78  & 0.77  &  {0.81} & 0.64  & 0.70  & 0.70  &  {0.75} & 0.73  & 0.77  & 0.76  &  {0.79}  \\
      \hline
      \#Top 1 & 0     & 1     & 2     & 12    & 1     & 2     & 2     & 13    & 0     & 2     & 3     & 11    & 1     & 4     & 1     & 12  \\
      \hline 
      \end{tabular}%
       \label{best}%
  \end{table}%
\end{landscape}

It is interesting to see that CDF-TS performs notably better than both DBSCAN and DP on many datasets, such as ForestType, Wilt, Dermatology and Wine. On most datasets, CDF-TS also outperforms its competitors ReScale and/or DScale by a wide gap.

Note that the performance gap is smaller for DP than it is for DBSCAN because DP is a more powerful algorithm which does not rely on a single density threshold to identify clusters.

To evaluate whether the performance difference among the three scaling algorithms is significant, we conduct the Friedman test with the post-hoc Nemenyi test \cite{demvsar2006statistical}.  Figure~\ref{sig} shows the results of the significance test for different clustering algorithms. 	The results show that the CDF-TS versions are significantly better than at least one of DScale and ReScale versions for each clustering algorithm. 
 
 \begin{figure}[!htb]
  	\centering
  \begin{subfigure}[b]{0.45\textwidth}
  \centering
    \includegraphics[width=2.6in]{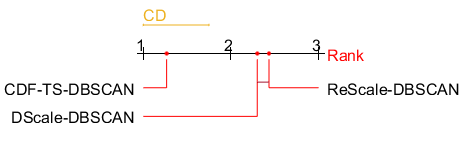}
    \caption{Significance test for DBSCAN}  \vspace{3mm}%
    \label{sig:a}
  \end{subfigure}  
  \begin{subfigure}[b]{0.45\textwidth}
  \centering
    \includegraphics[width=2.3in]{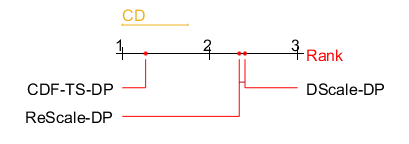}
    \caption{Significance test for DP}
        \label{sig:b}
  \end{subfigure}
  \begin{subfigure}[b]{0.45\textwidth}
  \centering
    \includegraphics[width=2.3in]{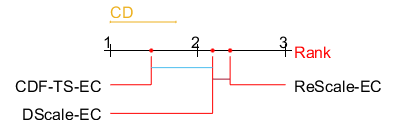}
    \caption{Significance test for EC}
        \label{sig:c}
  \end{subfigure}
  \begin{subfigure}[b]{0.45\textwidth}
  \centering
    \includegraphics[width=2.3in]{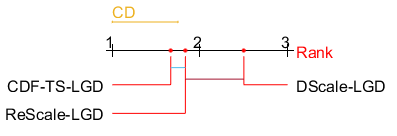}
    \caption{Significance test for LGD}
        \label{sig:d}
  \end{subfigure}
  	\caption{Critical difference (CD) diagram of the post-hoc Nemenyi test ($\alpha=0.10$). Two algorithms are significantly different if the gap between their ranks is larger than the CD. Otherwise, there is a line linking them.}
  	\label{sig}
  \end{figure}

 Here we compare the different effects on the transformed datasets due to ReScale and CDF-TS. The effects on the two synthetic datasets are shown in Figure~\ref{DESSyth} and Figure \ref{RScaleSyth}. They show that both the rescaled datasets are more axis-parallel distributed using ReScale than those using CDF-TS. For the 3L dataset, the blue cluster is still very sparse after running ReScale, as shown in Figure \ref{DESSyth:a}. In contrast, it becomes denser using CDF-TS, as shown in Figure \ref{RScaleSyth:a}. As a result, DBSCAN has much better performance with CDF-TS on the 3L dataset. Figure \ref{wilt} shows the MDS visualisation\footnote{Multidimensional scaling (MDS) \cite{borg2012applied} is used for visualising a high-dimensional dataset in a 2-dimensional space through a projection method which preserves as well as possible the original pairwise dissimilarities between instances.} on the Wilt dataset  where CDT-TS outperforms both DBSCAN and DP the most. The figure shows that the each cluster is more homogeneously distributed in the new space, which makes the boundary between clusters easier to be identified than that in the original space. In contrast, based on distance, most points of the red cluster are surrounded by points of the blue cluster, as shown in Figure \ref{wilt:a}. 

\begin{figure}[!htb]
\centering
  \begin{subfigure}[b]{0.43\textwidth}
  \centering
    \includegraphics[width=0.8\textwidth]{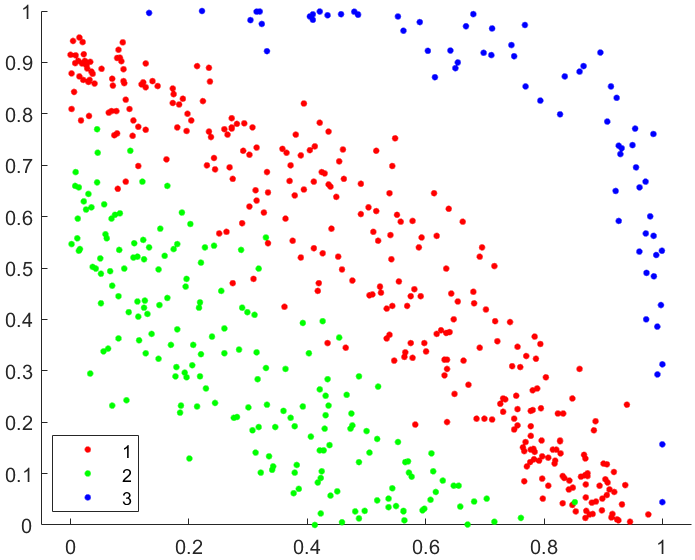}
    \caption{3L data distribution}
    \label{DESSyth:a}
  \end{subfigure}  %
  \begin{subfigure}[b]{0.43\textwidth}
  \centering
    \includegraphics[width=0.8\textwidth]{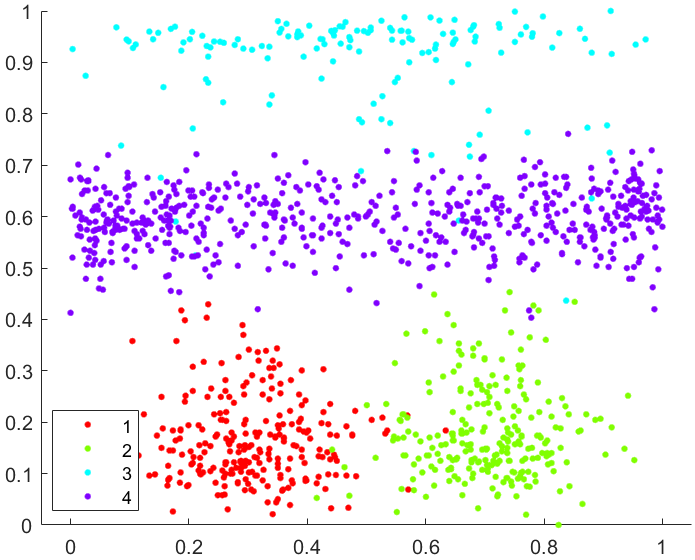}
    \caption{4C data distribution}
        \label{DESSyth:b}
  \end{subfigure}
\caption{Scatter plots after running ReScale when achieving the best DBSCAN clustering performance} 
    \label{DESSyth} 
\end{figure}

\begin{figure}[!htb]
\centering
  \begin{subfigure}[b]{0.43\textwidth}
  \centering
    \includegraphics[width=0.8\textwidth]{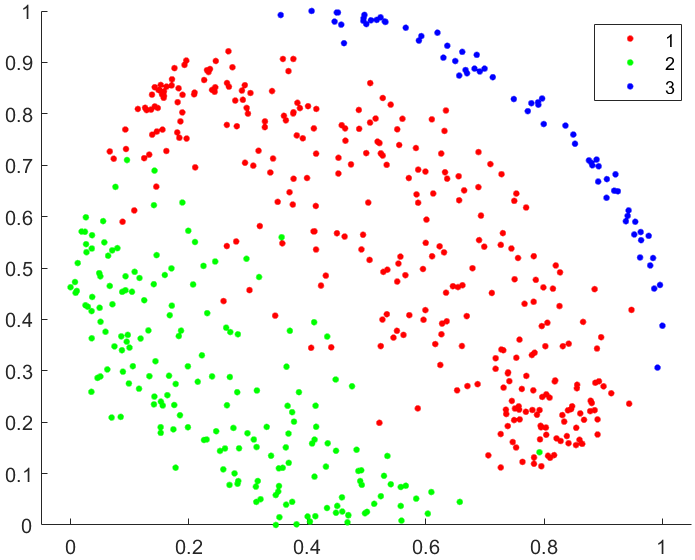}
    \caption{3L data distribution}
    \label{RScaleSyth:a}
  \end{subfigure}  
  \begin{subfigure}[b]{0.43\textwidth}
  \centering
    \includegraphics[width=0.8\textwidth]{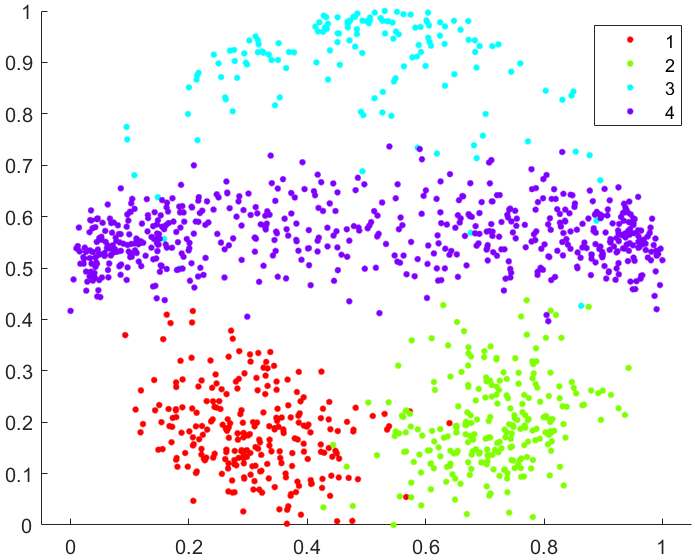}
    \caption{4C data distribution}
        \label{RScaleSyth:b}
  \end{subfigure}
\caption{Scatter plots after running CDF-TS when achieving the best DBSCAN clustering performance} 
    \label{RScaleSyth} 

\centering
  \begin{subfigure}[b]{0.42\textwidth}
  \centering
    \includegraphics[width=0.86\textwidth]{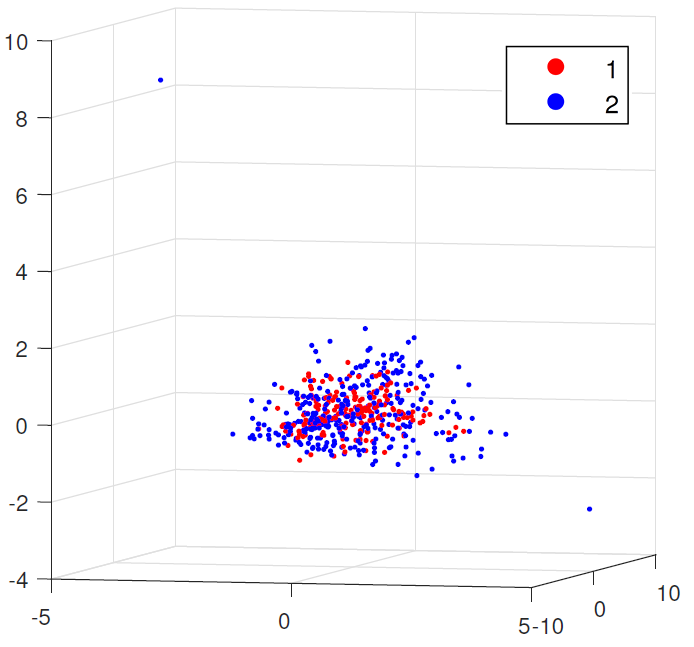}
    \caption{Based on distance}
    \label{wilt:a}
  \end{subfigure} 
  \begin{subfigure}[b]{0.42\textwidth}
  \centering
    \includegraphics[width=0.95\textwidth]{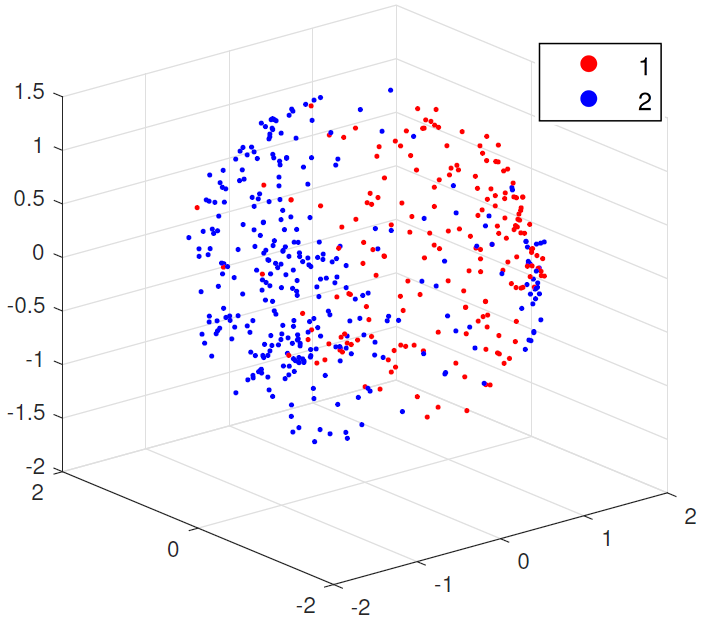}
    \caption{Based on CDF-TS}
        \label{wilt:b}
  \end{subfigure}
\caption{MDS plots on the Wilt dataset} 
    \label{wilt} 
\end{figure}

\subsection{Anomaly detection}

In this section, we evaluate the ability of CDF-TS to detect local anomalies based on $k$NN anomaly detection. 

Three state-of-the-art anomaly detectors, Local Outlier Factor (LOF) \cite{LOF2000}, iForest \cite{liu2008isolation} and iNNE \cite{bandaragoda2018isolation}, are also used in the comparison. Table \ref{para2} lists the parameters and their search ranges for each algorithm. Parameters $\psi$ and $t$ in iForest control the sub-sample size and number of iTrees, respectively.  We report the best performance of each algorithm on each dataset in terms of best AUC (Area under the Curve of ROC) \cite{fawcett2006introduction}. 

\begin{table}[!htb]
  \renewcommand{\arraystretch}{1.2}
 \setlength{\tabcolsep}{1.1pt}
  \centering
  \caption{Parameters and their search ranges.}
    \begin{tabular}{|c|c|}
    \hline
    Algorithm & Parameters and their search ranges \\
    \hline
    $k$NN/LOF & $k \in \{5\%n, 10\%n,...,50\%n \}$\\
    iForest/iNNE & $t=100$; $\psi \in \{2^1, 2^2,..., 2^{10}\}$ \\
\hdashline 
ReScale & $\psi=100  $; $\lambda\in \lbrace0.1,0.2,..., 0.5\rbrace$\\
DScale &   $\lambda\in \lbrace0.1,0.2,..., 0.5\rbrace$\\
CDF-TS &   $\lambda\in \lbrace0.1,0.2,..., 0.5\rbrace$; $\delta =0.015$ \\
    \hline
    \end{tabular}%
  \label{para2}%
\end{table}%

Table \ref{best2} compares the best AUC score of each algorithm. CDF-TS-$k$NN achieves the highest average AUC of 0.90 and performs the best on 5 out of 12 datasets. For the Syn 1 dataset which has significant overlapping on the individual attribute projection, iForest and ReScale perform the worst because they are based on individual attribute splitting and scaling, respectively. All of ReScale, DScale and CDF-TS improve the performance of $k$NN on Syn 1 dataset which has cluster anomalies, because these anomalies become farther from the normal cluster centre. It is interesting to mention that CDF-TS-$k$NN has the largest AUC increase in $k$NN on Dermatology and AnnThyroid.

  \begin{table}[!tb]
  \renewcommand{\arraystretch}{1.2}
 \setlength{\tabcolsep}{3.pt}
    \centering
  \caption{The best AUC on 12 datasets. The best performer on each dataset is boldfaced. Original, ReScale, DScale and CDF-TS represent $k$NN, ReScale-$k$NN, DeScale-$k$NN and CDF-TS-$k$NN, respectively.}
      \begin{tabular}{|c|ccccccc|}
      \hline
      {\multirow{2}[3]{*}{Data}}     & \multirow{2}[3]{*}{LOF}   & \multirow{2}[3]{*}{iForest} & \multirow{2}[3]{*}{iNNE}  & \multicolumn{4}{c|}{$k$NN}  \\\cline{5-8}   &   &   &  & Original & ReScale   & DScale    & CDF-TS  \\
      \hline
      AnnThyroid & 0.68  & 0.88  & 0.74  & 0.65  & 0.76  & 0.71  & \textbf{0.94} \\
      Pageblocks & 0.93  & 0.90  & 0.92  & 0.89  & 0.88  & 0.86  & \textbf{0.94} \\
      Tomcat & 0.67  & \textbf{0.81} & 0.69  & 0.63  & 0.77  & 0.69  & 0.78 \\
      Ant   & 0.68  & \textbf{0.77} & 0.70  & 0.67  & 0.76  & 0.71  & 0.76 \\
      BloodDonation & 0.79  & 0.82  & 0.74  & 0.69  & 0.84  & 0.80  & \textbf{0.86} \\
      Vowel & 0.93  & 0.92  & \textbf{0.95} & 0.93  & 0.930 & 0.90  & 0.94 \\
      Mfeat & 0.98  & 0.98  & 0.99  & 0.99  & 0.99  & \textbf{1.00} & 0.99 \\
      Dermatology & 0.99  & 0.86  & 0.98  & 0.91  & 0.99  & 0.97  & \textbf{1.00} \\
      Balance & 0.95  & 0.91  & \textbf{0.97} & 0.94  & 0.91  & 0.83  & 0.92 \\
      Velocity & 0.62  & 0.65  & 0.61  & 0.66  & 0.67  & \textbf{0.694} & 0.69 \\ \hdashline
      Syn 1 & 0.93  & 0.90  & 0.94  & 0.92  & 0.89  & 0.95  & \textbf{0.98} \\
      Syn 2 & 0.88  & 0.86  & 0.91  & 0.88  & 0.92  & \textbf{0.95} & 0.94 \\       \hline
      \textit{Average} & 0.83  & 0.86  & 0.85  & 0.81  & 0.86  & 0.84  & 0.90 \\   \hline
      \#Top 1 & 0     & 2     & 2     & 0     & 0     & 3     & 5 \\ 
      \hline
\end{tabular}%
  \label{best2}%
  \end{table}%

Figure \ref{sig2} shows the significance test on the three algorithms applied to $k$NN anomaly detector. It can be seen from the results that CDF-TS-$k$NN is significantly better than DScale-$k$NN and ReScale-$k$NN.

 \begin{figure}[!tb]
  	\centering
  \begin{subfigure}[b]{0.45\textwidth}
  \centering
    \includegraphics[width=2.9in]{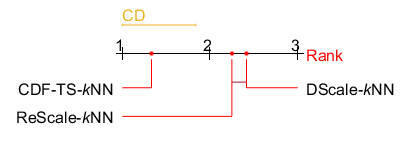}
  \end{subfigure}
  	\caption{Critical difference (CD) diagram of the post-hoc Nemenyi test ($\alpha=0.10$) for anomaly detection algorithms.}
  	\label{sig2}
  \end{figure}



\subsection{Compared with metric learning algorithms}

We have examined the best performance of PCA \cite{jolliffe2011principal} and t-SNE \cite{maaten2008visualizing} in DP clustering and $k$NN anomaly detection, as shown in Table \ref{tab:DP} and Table \ref{tab:knn}, respectively. We set the dimensionality of PCA and t-SNE output space to 2 or $d$. The perplexity in t-SNE are searched in $[10, 20, 30, 40 ,50]$. We find that PCA could not improve the performance on most of these tasks. t-SNE-DP has comparable clustering performance to CDF-TS-DP. However, t-SNE could not improve the performance on $k$NN anomaly detector on some datasets, such as the AnnThyroid,  Pageblocks and Tomcat datasets.  

  \begin{table}[!tb]
    \centering
    \caption{The best F-measure of DP and its PCA, t-SNE and CDF-TS versions. For each version, the best performer in each dataset is boldfaced.}
      \begin{tabular}{|c|cccc|}
      \hline
      Dataset  & \multicolumn{4}{c|}{DP}\\\cline{2-5}
      & Original & PCA   & t-SNE & CDF-TS   \\  \hline
      Segment & 0.78  & 0.78  & 0.81  & \textbf{0.84} \\
      Mice  & \textbf{1.00} & \textbf{1.00} & \textbf{1.00} & \textbf{1.00} \\
      Biodeg & 0.72  & 0.72  & 0.73  & \textbf{0.76} \\
      ILPD  & 0.60  & 0.60  & 0.61  & \textbf{0.64} \\
      ForestType & 0.69  & 0.75  & \textbf{0.87} & 0.85  \\
      Wilt  & 0.54  & 0.55  & 0.60  & \textbf{0.74} \\
      Musk  & 0.55  & 0.55  & 0.58  & \textbf{0.62} \\
      Libras & 0.52  & 0.52  & \textbf{0.56} & 0.53  \\
      Dermatology & 0.91  & 0.91  & \textbf{0.97} & 0.96  \\
      Haberman & 0.56  & 0.56  & 0.57  & \textbf{0.67} \\
      Seeds & 0.91  & 0.93  & \textbf{0.94} & \textbf{0.94} \\
      Wine  & 0.93  & \textbf{0.98} & 0.97  & 0.96  \\       \hdashline
      3L    & 0.82  & 0.70  & \textbf{0.91} & 0.89  \\
      4C    & 0.87  & 0.87  & \textbf{0.96} & 0.91   \\
      \hline
      \textit{Average} & 0.74  & 0.74  & 0.79  & 0.81  \\
      \hline
      {\#Top 1} & 1     & 2     & 7     & 8 \\
      \hline
      \end{tabular}%
    \label{tab:DP}%
  \end{table}%

  \begin{table}[!tb]
    \centering
    \caption{The best AUC on 12 datasets. The best performer on each dataset is boldfaced. PCA, t-SNE and CDF-TS represent PCA-$k$NN, t-SNE-$k$NN and CDF-TS-$k$NN, respectively.}
      \begin{tabular}{|c|cccc|}
      \hline
      {\multirow{2}[3]{*}{Data}} & \multicolumn{4}{c|}{$k$NN}\\ \cline{2-5}
      &  Original  & PCA   & t-SNE & CDF-TS   \\  \hline
      AnnThyroid & 0.65  & 0.65  & 0.67  & \textbf{0.94} \\
      Pageblocks & 0.89  & 0.89  & 0.88  & \textbf{0.94} \\
      Tomcat & 0.63  & 0.63  & 0.60  & \textbf{0.78} \\
      Ant   & 0.67  & 0.67  & 0.68  & \textbf{0.76} \\
      BloodDonation & 0.69  & 0.75  & 0.75  & \textbf{0.86} \\
      Vowel & 0.93  & 0.93  & \textbf{0.97} & 0.94 \\
      Mfeat & 0.99  & 0.99  & \textbf{1.00} & 0.99 \\
      Dermatology & 0.91  & 0.91  & \textbf{1.00} & \textbf{1.00} \\
      Balance & 0.94  & \textbf{0.96} & 0.94  & 0.92 \\
      Velocity & 0.66  & 0.66  & 0.67  & \textbf{0.69} \\       \hdashline
      Syn 1 & 0.92  & 0.92  & \textbf{1.00} & 0.98 \\
      Syn 2 & 0.88  & 0.88  & \textbf{0.96} & 0.94 \\
      \hline
      \textit{Average} & 0.81  & 0.82  & 0.84  & 0.90  \\
      \hline
      \#Top 1 & 0     & 1     & 5   & 7  \\
      \hline
      \end{tabular}%
    \label{tab:knn}%
  \end{table}%

In order to visually compare the effects of these transform methods, Table \ref{MDSa} shows the visualisation results on two datasets. It shows that most anomalies become anomalous clusters and farther away from normal points on the CDF-TS transformed datasets, and thus easier to be detected by the $k$NN anomaly detector. When using PCA, some normal points are still scattered having a large $k$NN distance from other normal points on the Dermatology dataset, so it has the same AUC score to that obtained on the original dataset. On the AnnThyroid dataset, the clustered anomalies  are still close to normal clusters in the t-SNE transformed space. With CDF-TS, the normal points occur on a sphere and there is a large gap between the surface of the sphere and the anomalies.

\begin{table} [!tb] 
  \renewcommand{\arraystretch}{1}
 \setlength{\tabcolsep}{4pt}
\centering
\caption{Visualisation results on two datasets, where red points indicate anomalies. Distance and CDF-TS results are based on MDS, $\eta=0.1$ and $\delta=0.015$ in CDF-TS; PCA and t-SNE results set output dimensionality to 3, and perplexity in t-SNE is 30.} 
  \begin{tabular}{|c|cccc|}
    \hline
     &  Distance  & PCA & t-SNE &  CDF-TS    \\
      \hline
    \begin{turn}{90}   \ \ \ \ Dermatology \end{turn}  & \includegraphics[width=1.37in]{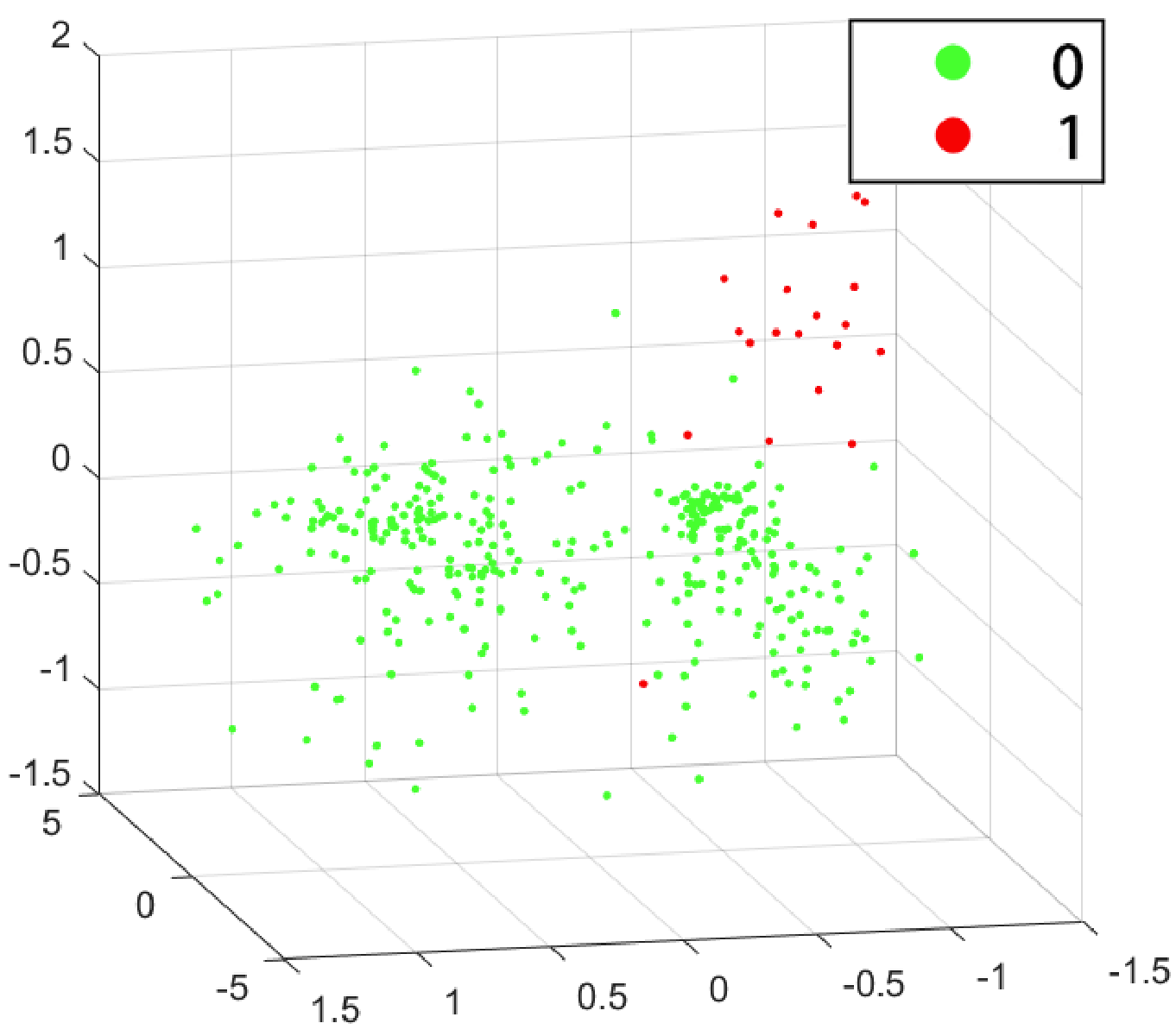} &       \includegraphics[width=1.37in]{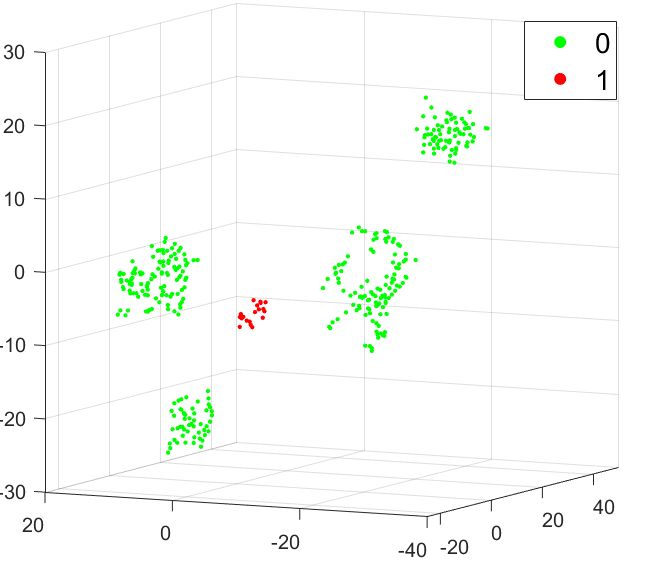}  &       \includegraphics[width=1.37in]{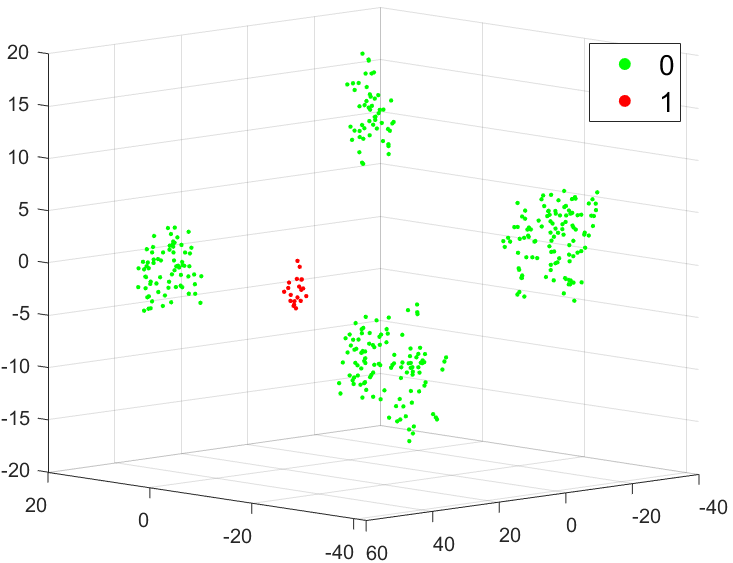}  &
      \includegraphics[width=1.37in]{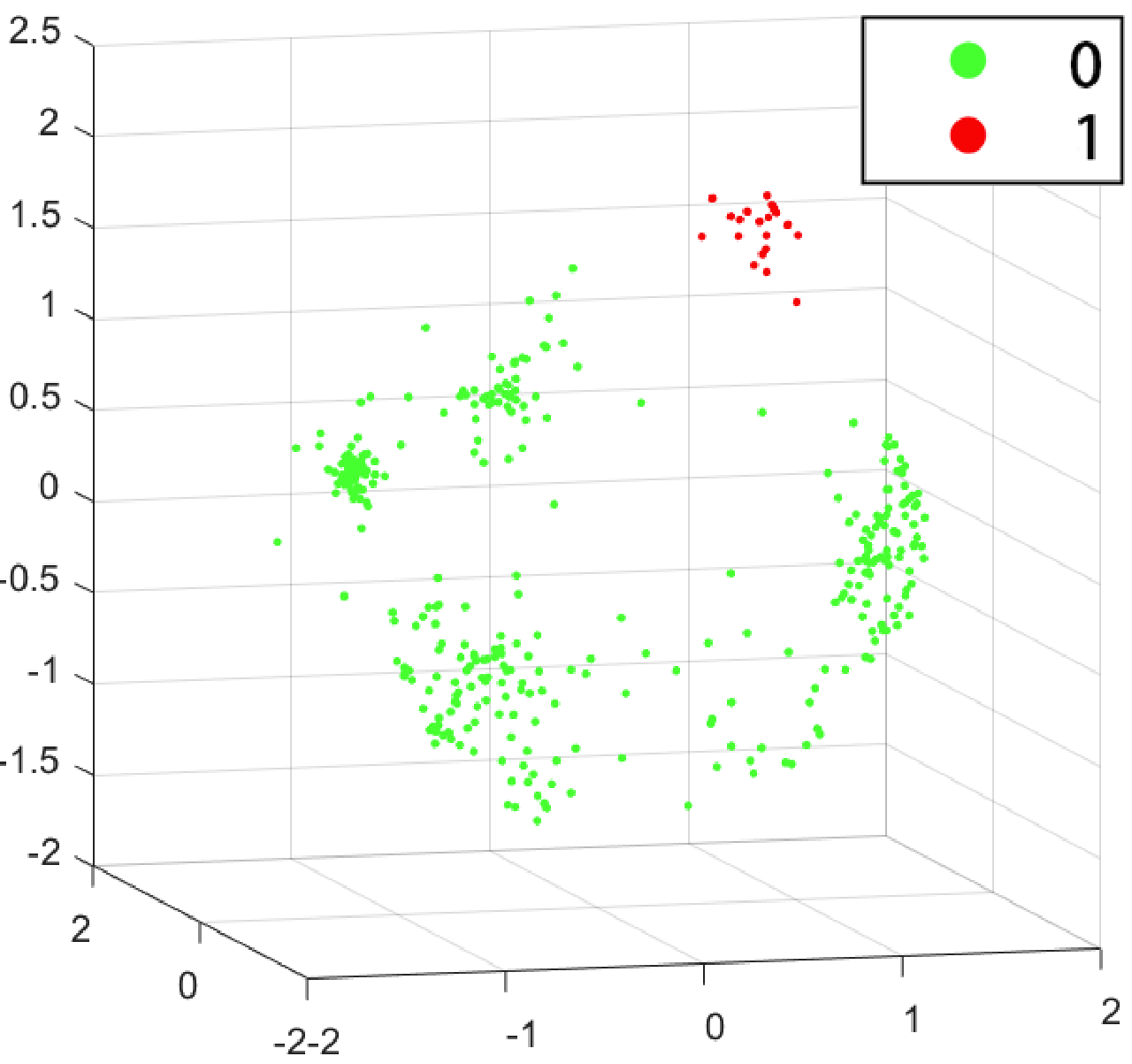}  \\
      \hdashline 
     \begin{turn}{90} \  \ \   \ \ AnnThyroid \end{turn}  & 
      \includegraphics[width=1.37in]{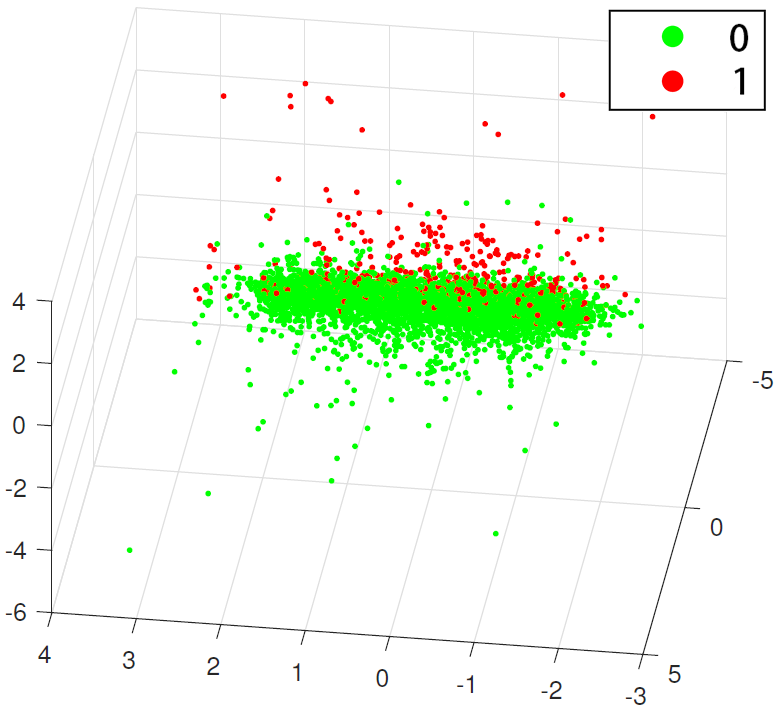} & \includegraphics[width=1.37in]{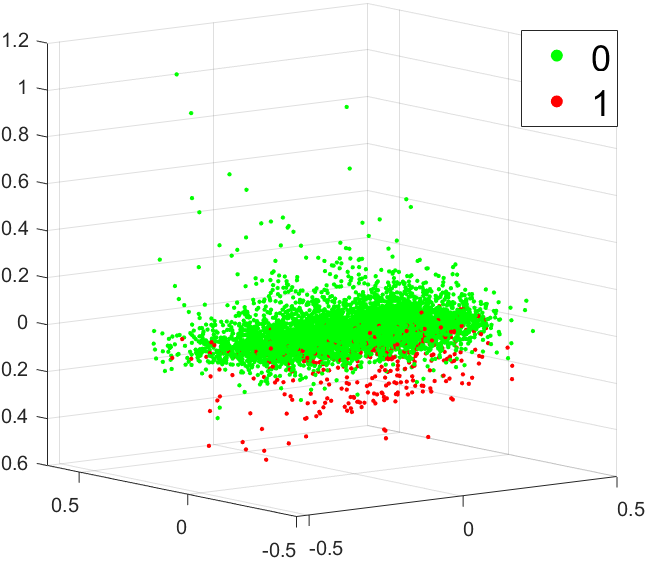}  &       \includegraphics[width=1.37in]{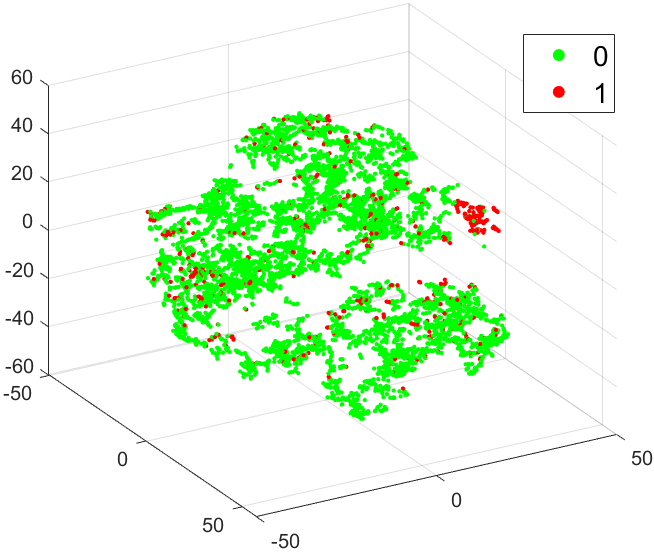}  &
        \includegraphics[width=1.37in]{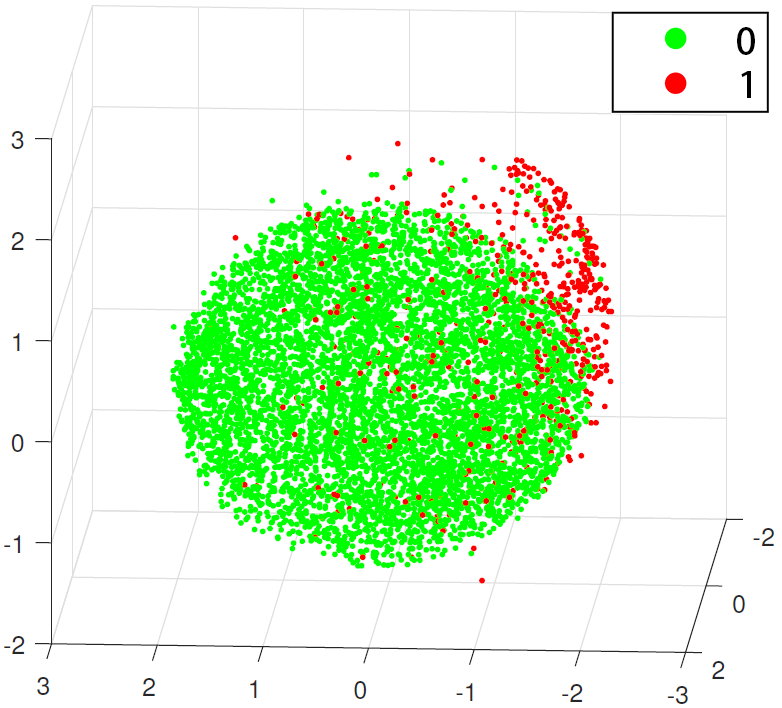} \\
      \hline 
  \end{tabular}
\label{MDSa}
\end{table}

\subsection{Run-time} 

ReScale, DScale, PCA, t-SNE and CDF-TS are all pre-processing methods, their computational complexities are shown in Table \ref{complexity}. Because many existing density-based clustering algorithms have time and space complexities of $\mathcal{O}(n^2)$, all these methods do not increase their overall complexities.

\begin{table}[!tb]
\centering
\caption{Computational complexity of  ReScale, DScale, PCA, t-SNE and CDF-TS.}
\begin{tabular}{|c|c|c|}
\hline 
Algorithm & Time complexity & Space complexity \\ 
\hline 
PCA & $\mathcal{O}(dn^2+d^3)$ & $\mathcal{O}(dn)$ \\
t-SNE  & $\mathcal{O}(dn^2)$ &  $\mathcal{O}(n^2)$ \\
ReScale & $\mathcal{O}(dn\psi)$ & $\mathcal{O}(dn+d\psi)$ \\ 
DScale and CDF-TS & $\mathcal{O}(dn^{2})$ & $\mathcal{O}(dn+n^2)$ \\ 
\hline 
\end{tabular} 
\label{complexity}
\end{table}

Table \ref{runtime} shows the runtime of the dissimilarity matrix calculation for each of the three methods on the Mfeat, Segment, Pageblocks and AnnThyroid datasets. Their parameters are set to be the same for all datasets, i.e., $\lambda=0.2$, $\psi=100$, $perplexity=30$, $NumDimensions=2$ and $\delta=0.015$. The result shows that CDF-TS had longer runtime than DScale because it requires multiple DScale calculations. However, t-SNE took the longest runtime because of search and optimisation algorithms used. We omit GPU results of PCA and t-SNE because their available program cannot take the advantage of GPU on the Matlab platform.

  \begin{table}[!hbt]
 \setlength{\tabcolsep}{2.8pt}
    \centering
    \caption{Runtime comparison of dissimilarity matrix calculation (in seconds).}
      \begin{tabular}{|c|ccccc|ccc|}
      \hline
      \multirow{2}[4]{*}{Dataset} & \multicolumn{5}{c|}{CPU}              & \multicolumn{3}{c|}{GPU}  \\
\cline{2-9}            & ReScale & DScale & PCA   & t-SNE & CDF-TS & ReScale & DScale & CDF-TS  \\
      \hline
      Mfeat & 0.69  & 0.03  & 0.05  & 2.47  & 1.26  & 0.05  & 0.02  & 2.89   \\
      Segment & 0.29  & 0.28  & 0.01  & 23.38  & 2.68  & 0.02  & 0.05  & 0.11  \\
      Pageblocks & 0.16  & 1.64  & 0.01  & 43.71  & 18.64  & 0.02  & 0.17  & 2.84  \\
      AnnThyroid & 0.19  & 3.09  & 0.01  & 61.52  & 24.40  & 0.02  & 0.26  & 2.97  \\
      \hline
      \end{tabular}%
    \label{runtime}%
  \end{table}%

\section{Discussion}
 \label{sec_disc}
  
\subsection{Parameter settings} 

Both DScale and CDF-TS are preprocessing methods and have one critical parameter $\lambda$ to define the $\lambda$-neighbourhood. The density-ratio based on a small $\lambda$ will approximate density ratio of 1 everywhere and provides no information; while on a large $\lambda$, it will approximate the actual density and has no advantage. Generally, in practice, we found that $\lambda \in [0.1, 0.3]$ and $\epsilon$ in DBSCAN and DP shall be set slightly smaller than $\lambda$. The $\lambda$ could be larger for high-dimensional and sparse datasets to include sufficient points in the $\lambda$-neighbourhood.

The parameter $\delta$ in CDF-TS controls the number of iterations, i.e., a small $\delta$ usually results in a high number of iterations in the CDF-TS process, which yields the desired outcome: the shifted clusters of homogeneous density. In our experiments, we set $\delta=0.015$ as the default value which usually ran in about 5 iterations in CDF-TS. This significantly improves the clustering results of existing algorithms. We provide a sensitivity test about $\eta$ and the number of iterations for CDF-TS on four datasets in  \ref{sensti}. 

Since CDF-TS is a preprocessing technique, we have used clustering and anomaly detection algorithms to indirectly validate the effectiveness of the transformation, i.e., it can improve the task-specific performance of existing clustering  and anomaly detection algorithms. 

However, existing density-based algorithms are sensitive to parameter settings and there is no uniform guide in setting their parameters. The sensitivity exists because the nonparametric density estimator used in these algorithms suffers from noise in a random sample without specific knowledge about the data domain \cite{gramacki2018nonparametric}. Although there are some heuristic methods, their performance cannot be guaranteed. 

Hyperparameter optimisation in a clustering algorithm, as an unsupervised learning method, is still an open question in the research community and is beyond the scope of this paper \cite{aggarwal2013data}. We followed the common practice in searching the best parameters for each algorithm to demonstrate its optimal performance.

In addition, we used a train-and-test evaluation method to compare the performance of t-SNE and CDF-TS, where a dataset is split into two subsets of equal size and the parameter setting of an algorithm is determined on the training set before it is used on the test.
The results show that CDF-TS still performs better than t-SNE on $k$NN anomaly detection tasks, i.e., CDF-TS has an average AUC of 0.90 and t-SNE has an average AUC of 0.82 with $k$NN anomaly detector. However, since density-based clustering is very sensitive to the data sampling variation, both CDF-TS and t-SNE only get the average F-measure of 0.65 with DP, still better than the original DP with the average F-measure of 0.62. 


\subsection{Compared with a density equalisation method}
\label{knnkernel}
CDF-TS reduces density variance of the dataset, such that the cluster structure becomes clearer and easier to be extracted with a single density threshold. If  the density is equalised irrespective of the bandwidth selection, then the cluster density information will be totally lost; and  there will be no clusters to detect.

In the context of kernel $k$-means, $k$NN kernel  \cite{8166757} has been suggested to be a way to equalise the density of a given dataset to reduce the bias of kernel $k$-means algorithm and to improve the clustering performance on datasets with inhomogeneous cluster densities. The dissimilarity matrix generated by $k$NN kernel is binary. It can be seen as the adjacent matrix of $k$NN graph such that ``1'' means a point is in the set of $k$ nearest neighbours of another point, and ``0'' otherwise. Thus, a $k$-means clustering algorithm can be used to group points based on their $k$NN graph.  

However, $k$NN kernel cannot be applied to both density-based clustering and $k$NN anomaly detection. This is because it converts all points to have the same density in the new space regardless of a density estimator (and its bandwidth setting) used in an intended algorithm. Thus, DBSCAN will either group neighbouring clusters into a single cluster or assign all clusters to noise.  

kNN kernel doesn't work for DP either for the same reason DP links a point to another point with a higher density to form clusters in the last step of the clustering process \cite{rodriguez2014clustering}. DP would not be able to link points when all points have the same density.

For $k$NN anomaly detection, replacing the distance measure with the $k$NN kernel produces either 0 or 1 similarity between any two points. This does not work for anomaly detection.  
 
\section{Conclusions}
\label{sec_conclu}

The density bias issue has been a perennial problem for existing density-based algorithms. Different methods have been attempted, but the issue remains.

We introduce a CDF Transform-and-Shift (CDF-TS) algorithm to comprehensively address this issue for both density-based clustering and distance-based anomaly detection algorithms. This is achieved without substantial computational cost. As CDF-TS is applied as a preprocessing step, no algorithmic modification is required to these algorithms to address their density bias issue.
 
CDF-TS is the first density-ratio method that performs both the space transformation and point-shift comprehensively, i.e., (a) all locally low-density locations  in the original space become globally low-density locations in the transformed space; and (b) the volume of every point's neighbourhood in the entire multi-dimensional space is modified via point-shifting in order to homogenise cluster density.  Existing CDF transform methods such as ReScale \cite{ZHU2016983} and DScale \cite{DSCALE} achieve a much-reduced effect on the rescaled dataset because the former is a one-dimensional method and the latter is one without point-shifting (and the resultant measure is not a metric).

In addition, CDF-TS is more generic than these two CDF transform methods because the algorithm permits different density estimators. We show that either $\epsilon$-neighbourhood or $k$NN density estimators can be successfully incorporated. It is also more effective than existing methods such as t-SNE and $k$NN kernel \cite{8166757}. As a result, CDF-TS can be applied to more algorithms/tasks with a better outcome than these methods.

Through extensive evaluations, we show that CDF-TS significantly improves the performance of four existing density-based clustering algorithms and one existing distance-based anomaly detector.

It is interesting to note that other algorithms, which are not normally considered as density-based, have also been identified with a density bias issue, e.g., spectral clustering \cite{ozertem2008mean}. We think CDF-TS can be a potential solution as well. This will be explored in the future.

\appendix 
 
\section{DScale algorithm}
\label{appA}

Algorithm~\ref{Scaling} is a generalisation of the implementation of DScale [14] which can employ a density estimator with a bandwidth parameter $\lambda$. It requires one parameter $\lambda$ only.

\begin{algorithm}[!htb]
\small
	\caption{DScale($S$, $\lambda$, $d$)}
	\begin{algorithmic}[1] 
		\Require $S$ - input distance matrix ($n  \times n$ matrix); $\lambda$ - bandwidth parameter; $d$ - dimensionality of the dataset.
		\Ensure $S'$ - distance matrix after scaling.
        \State $m \leftarrow$ the maximum distance in $S$ 
        \State Initialising $n\times n$ matrix $S'$ 
		\For {$i=1$ to $n$} 
    	\State  
    	$r(x_i)=\frac{m}{\lambda}\times (\frac{\vert\mathcal{N}(x_i,s,\lambda)\vert}{n})^{\frac{1}{d}}$   
    	\State $\forall_{x_j\in \mathcal{N}(x_i,s,\lambda)} \ S'[{x_i,x_j}]=S[{x_i,x_j}]\times r(x_i) $  
    	\State $\forall_{x_j\in D\setminus \mathcal{N}(x_i,s,\lambda)} \ S'[{x_i,x_j}] =  (S[{x_i,x_j}]-\lambda)\times \frac{m-\lambda\times r(x_i)}{m-\lambda}+\lambda\times r(x_i)$ 
		\EndFor             
		\State \Return $S'$ 
	\end{algorithmic}
	\label{Scaling} 
\end{algorithm}

\section{Intuitive argument why variance reduces as stated in Observation 2}
\label{appB}
 
Let $x\in D$ be close to $y\in D$ such that $||x-y||\approx 0$. We have:
\begin{eqnarray}
\small
 s(\bar{x}',\bar{y}') &=& ||\bar{x}' - \bar{y}'|| \nonumber \\
&=& ||\frac{1}{n}\sum_{z\in D}(\frac{s'(z,x)}{s(z,x)}(x-z)- \frac{s'(z,y)}{s(z,y)}(y-z)))|| \nonumber
\end{eqnarray}
Now since $||x-y||\approx 0$, we have $\forall_{z\in D} \ s(z,x) \approx s(z,y)$ and $s'(z,x) \approx s'(z,y)$ and can simplify the above equation:
\begin{equation} 
\small
 s(\bar{x}',\bar{y}')   
 \approx ||\frac{1}{n}\sum_{z\in D} \frac{s'(z,x)}{s(z,x)}(x-y)|| = s(x,y) \frac{1}{n}\sum_{z\in D} \frac{s'(z,x)}{s(z,x)} \nonumber 
\end{equation}

We now consider separately the cases when point $z$ lies inside and outside of the neighbourhood $\mathcal{N}(x;s,\lambda)$:

(a) For points inside the neighbourhood $z\in\mathcal{N}(x;s,\lambda)$, when $pdf(z;s,\lambda)$ varies slowly over the neighbourhood of $x$ such that $pdf(z;s,\lambda) \approx pdf(x;s,\lambda)$, then we have $r(x)\approx r(z)$ and $s'(z,x)\approx s'(x,z)$. Thus, we get: 
\begin{eqnarray}
\small
\sum_{z\in \mathcal{N}(x;s,\lambda )} \frac{s'(z,x)}{s(z,x)}\approx \sum_{z\in \mathcal{N}(x;s,\lambda )} \!\! \frac{s'(x,z)}{s(x,z)}= |\mathcal{N}(x;s,\lambda ) | \times  r(x;\lambda)
\nonumber
\end{eqnarray}

(b) For points lying outside of the neighbourhood, $z\notin\mathcal{N}(x;s,\lambda)$: given a sufficient small $\lambda$ such that $|\mathcal{N}(x;s,\lambda)|$ is small and the average density $pdf(z;s,\lambda)$ is around $\frac{n}{V_m}$, then the average rescaling factor is approximately 1: ${\mathbb E}_{x\in D}[r(z;\lambda)] \approx 1$, then we have ${\mathbb E}_{z\in \mathcal{N}(x;s,\lambda)} \ [\frac{s'(z,x)}{s(z,x)}]\approx {\mathbb E}_{z\notin \mathcal{N}(x;s,\lambda)} \ [\frac{s'(z,x)}{s(z,x)}] \approx 1$. Thus, we have:
\begin{eqnarray}
\small
 \frac{s(\bar{x}',\bar{y}')}{s(x,y)}    &\approx&
  {\displaystyle \frac{1}{n}( \!\!\sum_{z\in \mathcal{N}(x;s,\lambda)} \!\!  \frac{s'(z,x)}{s(z,x)} + \!\! \sum_{z\notin \mathcal{N}(x;s,\lambda)} \!\!  \frac{s'(z,x)}{s(z,x)})}  \nonumber \\
& \approx & (\frac{|\mathcal{N}(x;s,\lambda ) |}{n}\times r(x;\lambda)
 +\frac{n-|\mathcal{N}(x;s,\lambda ) |}{n})     \nonumber
 \label{rx}
 \end{eqnarray}
The above relation can be rewritten as:
\begin{eqnarray}
\small
	\frac{ \frac{s(\bar{x}',\bar{y}')}{s(x,y)} -1}{ r(x;\lambda) -1} \approx \frac{|\mathcal{N}(x;s,\lambda ) |}{n}  \nonumber
\end{eqnarray}
As $\frac{|\mathcal{N}(x;s,\lambda ) |}{n} \in (0,1)$, the relation between $\frac{s(\bar{x}',\bar{y}')}{s(x,y)}$ and $r(x;\lambda)$ depends on whether $x$ is in a sparse or dense region:
\begin{eqnarray}
\small
r(x;\lambda) <  \frac{s(\bar{x}',\bar{y}')}{s(x,y)}  < 1 \mbox{ if} pdf(x;s,\lambda) < \frac{n}{V_m}  \label{rate-relation1}\\
r(x;\lambda) >  \frac{s(\bar{x}',\bar{y}')}{s(x,y)}  > 1 \mbox{ if} pdf(x;s,\lambda) > \frac{n}{V_m}
\label{rate-relation2}
\end{eqnarray}

When the density varies slowly in the $\mathcal{N}(x;s,\lambda)$, then $\forall_{a,b\in \mathcal{N}(x;s,\lambda), ||a-b||\approx 0} \  \frac{s(\bar{a}',\bar{b}')}{s(a,b)} \approx R$, where $R$ is a constant. Thus, density still varies slowly in $\bar{\lambda}'$-neighbourhood.   


Let $\mathcal{N}(x';s,\bar{\lambda}')$ be the new neighbourhood covering most points in $\mathcal{N}(x;s,\lambda)$ in the transformed space. Since the density still varies slowly in  $\mathcal{N}(x';s,\bar{\lambda}')$ and when $\lambda$ is sufficiently small, we have:  
\begin{eqnarray}
\small
{pdf} (\bar{x}';s,{\lambda})   &\approx&  {pdf} (\bar{x}';s,\bar{\lambda}') \nonumber \\ &\approx& \frac{|\mathcal{N}(x;s,\lambda)|}{V_{\bar{\lambda}'}} =\frac{pdf(x;s,\lambda)\times V_{\lambda}}{V_{\lambda}\times  (\frac{s(\bar{x}',\bar{y}')}{s(x,y)})^d} 
\label{eq2}
\end{eqnarray}
\noindent 
Here we have two scenarios, depending on the density of $x$:
\begin{enumerate}
   \item  
   In sparse region $pdf(x;s,\lambda)<n/V_m$,  we have $r(x;\lambda)< \frac{s(\bar{x}',\bar{y}')}{s(x,y)} <1$, as shown in Equation \ref{rate-relation1}. Using this inequality in Equation \ref{eq2}, we have:
    \[n/V_m>pdf(\bar{x}';s,\lambda)>pdf(x;s,\lambda)\]
    \item  In dense region $pdf(x;s,\lambda)>n/V_m$,  we have $r(x;\lambda)> \frac{s(\bar{x}',\bar{y}')}{s(x,y)} >1$, as shown in Equation \ref{rate-relation2}. Using this inequality in Equation \ref{eq2}, we have:
    \[n/V_m<pdf(\bar{x}';s,\lambda)<pdf(x;s,\lambda)\] 
\end{enumerate}

Therefore, given a sufficiently small $\lambda$, $|\mathcal{N}(x;s,\lambda)|$ is small
such that the $\lambda$-neighbourhood density of every point $x\in D$ varies slowly, $pdf(\bar{x}';s,\lambda)$ should be closer to $\frac{n}{V_m}$ after the shifting process. Consequently, if the average density of $pdf(\bar{x}';s,\lambda)$ remains unchanged (approximately $\frac{n}{V_m}$), we should have: $$
\mathbb{VAR}_{\bar{x}'\in D'}[pdf(\bar{x}';s,\lambda)] < \mathbb{VAR}_{x\in D}[pdf(x;s,\lambda)]$$

\section{AMI scores for DP and LGD clustering} 
 \label{appC}
 
Table \ref{tab:NMI} shows the best AMI \cite{vinh2010information} of DP, LGD and their ReScale, DScale and CDF-TS versions. The results show similar trends as in Table \ref{best} in Section \ref{bestClu}, i.e., the CDF-TS version is significantly better than the contender versions. 

   \begin{table}[!tb]
     \centering
     \caption{The best AMI of DP and LGD, and their ReScale, DScale and CDF-TS versions. For each clustering algorithm, the best performer in each dataset is boldfaced.}
      \begin{tabular}{|c|cccc|cccc|}
      \hline
      \multirow{2}[3]{*}{Data} & \multicolumn{4}{c|}{DP}       & \multicolumn{4}{c|}{LGD}  \\
\cline{2-9}            & Orig  & ReS   & DS    & CDF-TS & Orig  & ReS   & DS    & CDF-TS   \\ \hline
      Pendig & 0.78  & 0.81  & 0.78  & \textbf{0.83} & 0.80  & 0.84  & 0.83  & \textbf{0.85} \\
      Segment & 0.75  & 0.75  & 0.76  & \textbf{0.79} & 0.67  & 0.70  & 0.67  & \textbf{0.73} \\
      Mice  & \textbf{1.00} & \textbf{1.00} & \textbf{1.00} & \textbf{1.00} & \textbf{1.00} & \textbf{1.00} & \textbf{1.00} & \textbf{1.00} \\
      Biodeg & 0.15  & 0.18  & 0.16  & \textbf{0.22} & 0.10  & 0.15  & 0.13  & \textbf{0.16} \\
      ILPD  & 0.07  & 0.09  & 0.08  & \textbf{0.12} & 0.05  & 0.117 & 0.06  & \textbf{0.12} \\
      ForestType & 0.47  & 0.58  & 0.49  & \textbf{0.62} & 0.53  & 0.57  & 0.54  & \textbf{0.60} \\
      Wilt  & 0.04  & 0.06  & 0.06  & \textbf{0.33} & 0.01  & 0.03  & 0.01  & \textbf{0.18} \\
      Musk  & 0.13  & 0.13  & 0.14  & \textbf{0.15} & 0.05  & 0.05  & 0.08  & \textbf{0.085} \\
      Libras & 0.66  & 0.68  & 0.67  & \textbf{0.69} & 0.56  & 0.58  & 0.56  & \textbf{0.63} \\
      Dermatology & 0.88  & \textbf{0.95} & 0.88  & 0.94  & 0.86  & \textbf{0.95} & 0.90  & 0.94 \\
      Haberman & 0.06  & 0.09  & 0.07  & \textbf{0.10} & 0.02  & \textbf{0.09} & 0.03  & 0.085 \\
      Seeds & 0.74  & 0.76  & 0.74  & \textbf{0.80} & 0.73  & 0.73  & 0.73  & \textbf{0.74} \\
      Wine  & 0.80  & 0.82  & 0.85  & \textbf{0.86} & 0.76  & 0.80  & 0.77  & \textbf{0.86} \\ \hdashline
      3L    & 0.57  & 0.62  & 0.64  & \textbf{0.68} & \textbf{0.47} & 0.53  & \textbf{0.47} & 0.45 \\ 
      4C    & 0.74  & 0.80  & \textbf{0.86} & 0.79  & 0.86  & 0.88  & 0.86  & \textbf{0.89}  \\
      \hline
      Average & 0.52  & 0.55  & 0.55  & 0.59 & 0.50  & 0.53  & 0.51  & 0.55  \\
      \hline
      \#Top 1 & 1     & 2     & 2     & 13    & 2     & 3     & 2     & 12 \\
      \hline
       \end{tabular}%
     \label{tab:NMI}%
   \end{table}%

\section{Sensitivity analyses}
\label{sensti}
 
In this section, we select four datasets (4C, Haberman, Wilt and ForestType) in which CDF-TS significantly improves the performance of DBSCAN for the sensitivity analyses. 

Figure \ref{Sensitivity} shows the best F-measure of the CDF-TS versions of four clustering algorithms with $\eta=0.3$ but different iterations from 1 to 10. The results show that only a few iterations allow most clustering algorithms to reach a good performance; while higher iterations may degrade the clustering performances of LGD and DBSCAN, as shown in Wilt and ForestType.

Figure \ref{Sensitivity2} shows the best F-measure of the CDF-TS versions of four clustering algorithms with 5 iterations but different $\eta$. Most clustering algorithms can reach a good F-measure with $\eta =$ 0.2 or 0.3.
 
 \begin{figure}[!htb]
  	\centering
  \begin{subfigure}[b]{0.45\textwidth}
  \centering
    \includegraphics[width=2.3in]{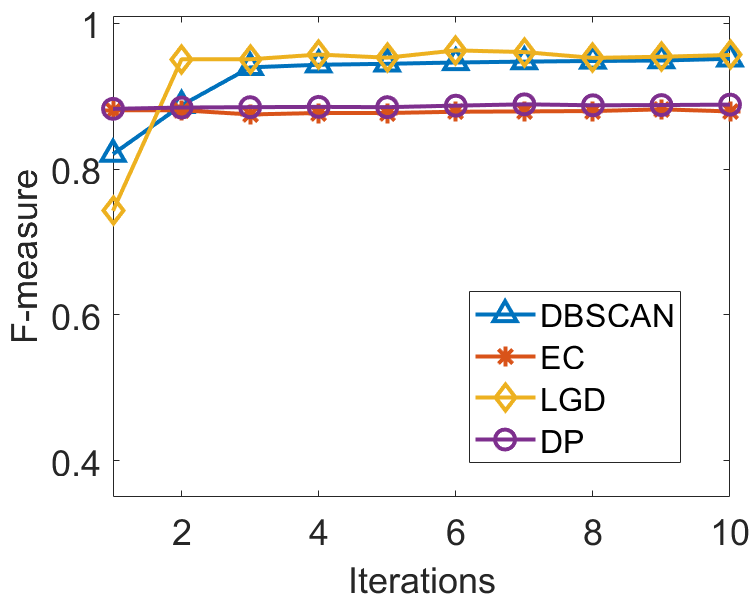}
    \caption{4C}   
    \label{Sensitivity:a}
  \end{subfigure}  
  \begin{subfigure}[b]{0.45\textwidth}
  \centering
    \includegraphics[width=2.3in]{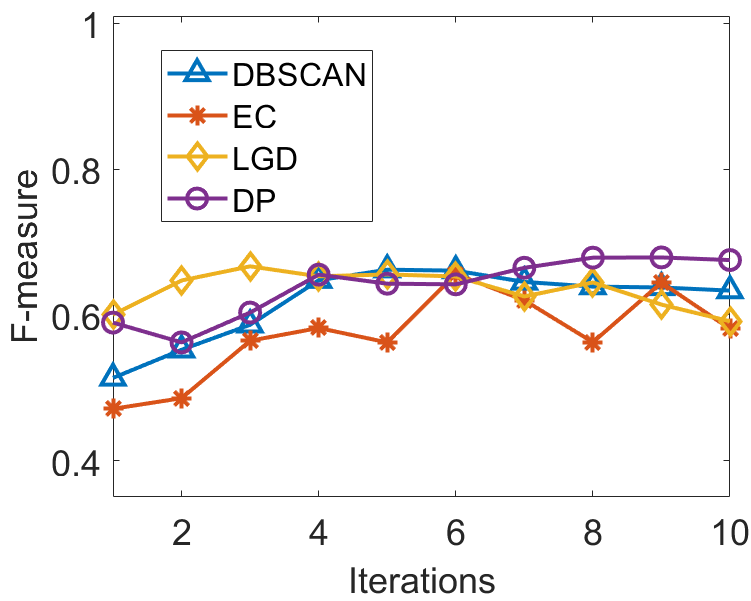}
    \caption{Haberman}
        \label{Sensitivity:b}
  \end{subfigure}
  \begin{subfigure}[b]{0.45\textwidth}
  \centering
    \includegraphics[width=2.3in]{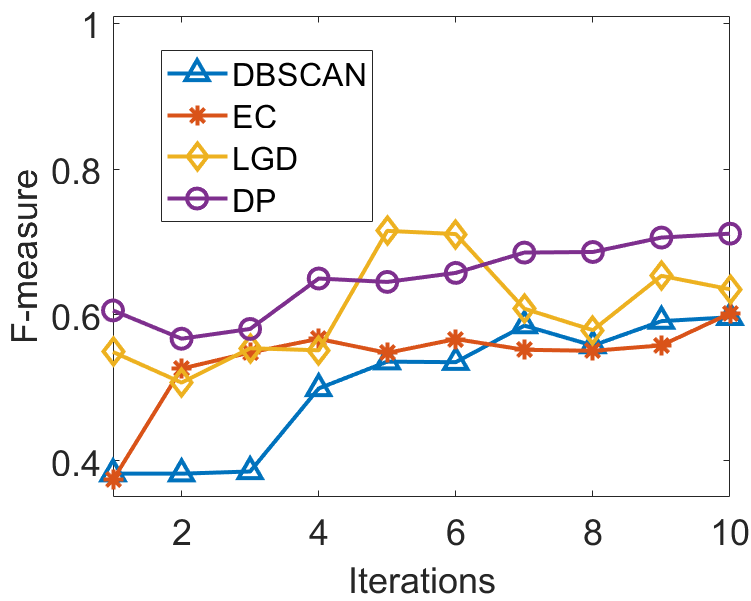}
    \caption{Wilt}
        \label{Sensitivity:c}
  \end{subfigure}
  \begin{subfigure}[b]{0.45\textwidth}
  \centering
    \includegraphics[width=2.3in]{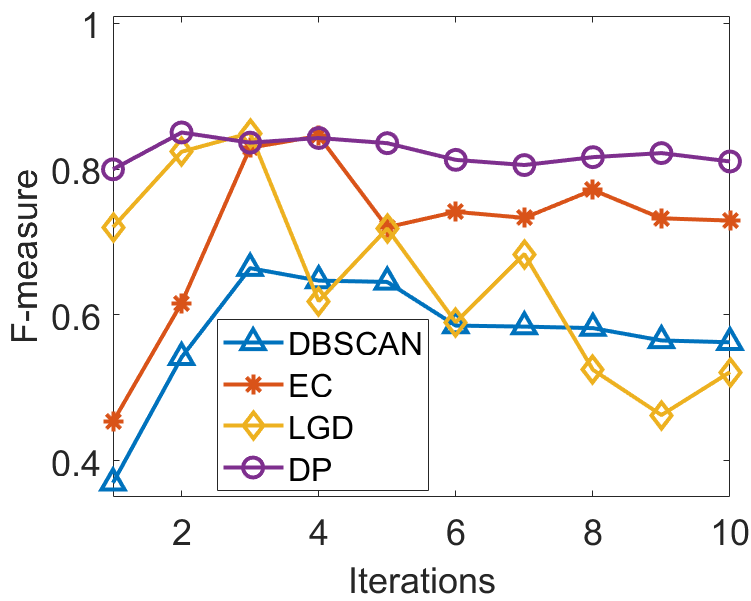}
    \caption{ForestType}
        \label{Sensitivity:d}
  \end{subfigure}
  	\caption{The best F-measure of the CDF-TS versions of four clustering algorithms with $\eta=0.3$ but different iterations from 1 to 10 on four datasets.} 
  	\label{Sensitivity}
  \end{figure}
  
  \begin{figure}[!htb]
  	\centering
  \begin{subfigure}[b]{0.45\textwidth}
  \centering
    \includegraphics[width=2.3in]{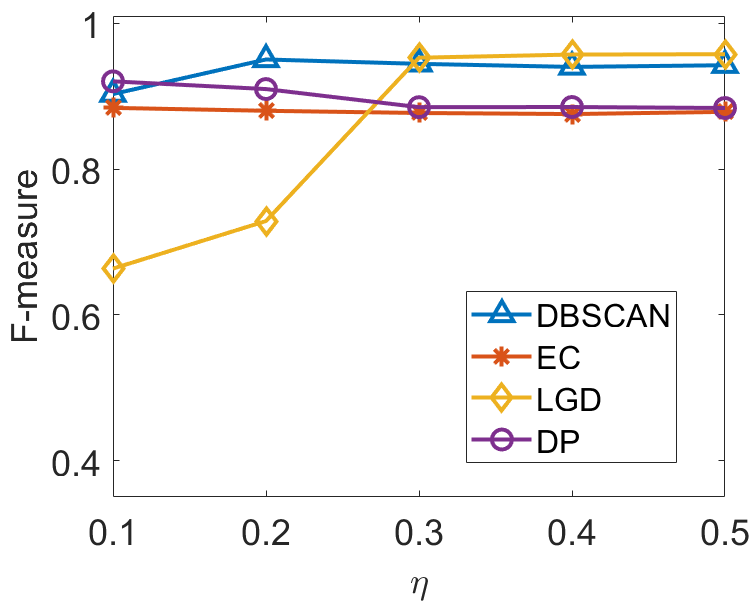}
    \caption{4C}  
    \label{Sensitivity2:a}
  \end{subfigure}  
  \begin{subfigure}[b]{0.45\textwidth}
  \centering
    \includegraphics[width=2.3in]{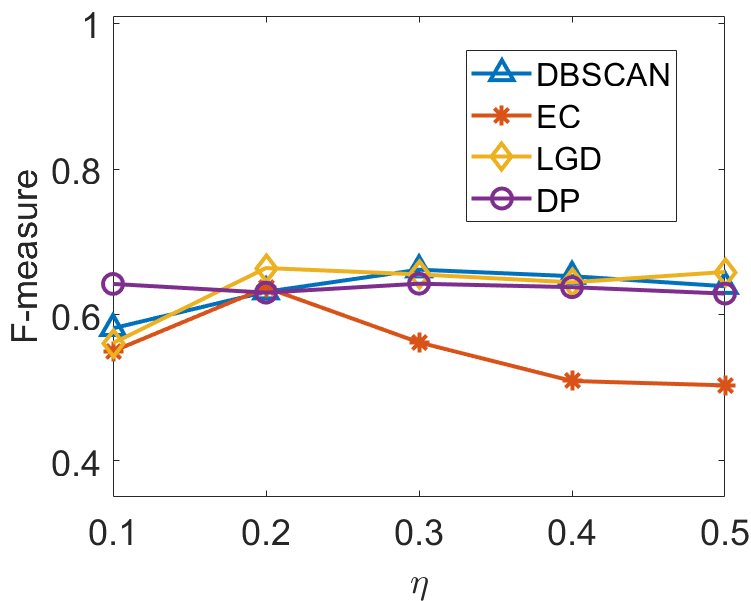}
    \caption{Haberman}
        \label{Sensitivity2:b}
  \end{subfigure}
  \begin{subfigure}[b]{0.45\textwidth}
  \centering
    \includegraphics[width=2.3in]{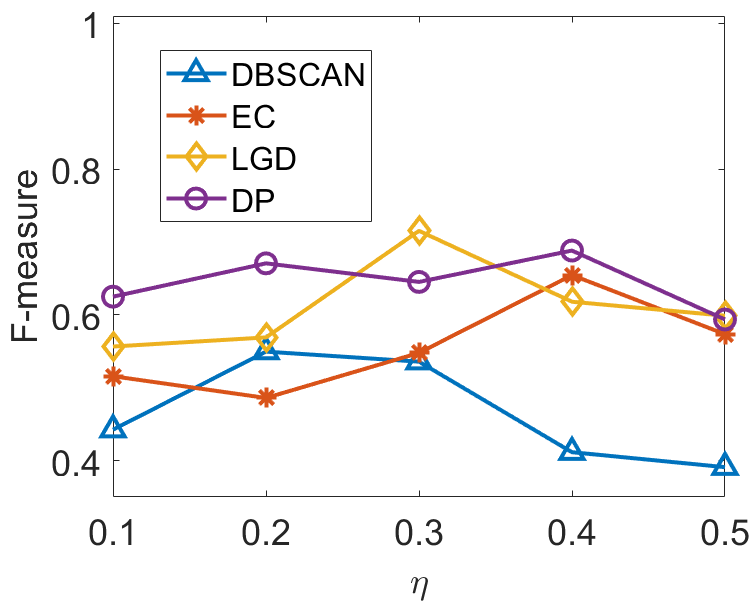}
    \caption{Wilt}
        \label{Sensitivity2:c}
  \end{subfigure}
  \begin{subfigure}[b]{0.45\textwidth}
  \centering
    \includegraphics[width=2.3in]{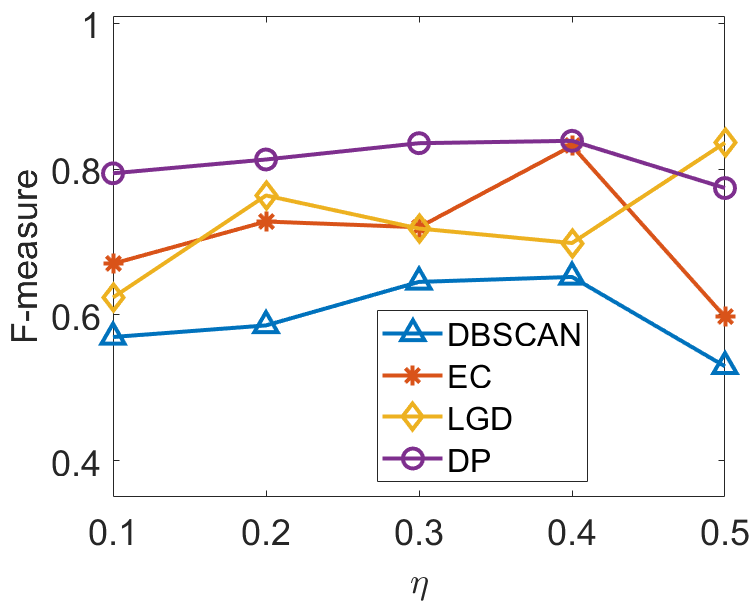}
    \caption{ForestType}
        \label{Sensitivity2:d}
  \end{subfigure}
  	\caption{The best F-measure of the CDF-TS versions of four clustering algorithms with 5 iterations but different $\eta$ settings on four datasets.} 
  	\label{Sensitivity2}
  \end{figure}

\bibliographystyle{plainnat}
\bibliography{references}

\begin{thebibliography}{45}
\providecommand{\natexlab}[1]{#1}
\providecommand{\url}[1]{\texttt{#1}}
\expandafter\ifx\csname urlstyle\endcsname\relax
  \providecommand{\doi}[1]{doi: #1}\else
  \providecommand{\doi}{doi: \begingroup \urlstyle{rm}\Url}\fi

\bibitem[Aggarwal(2017)]{aggarwal2016outlier}
Charu~C Aggarwal.
\newblock \emph{Outlier Analysis}.
\newblock Springer International Publishing, Switzerland, 2nd edition, 2017.
\newblock ISBN 978-3-319-47577-6.

\bibitem[Aggarwal and Reddy(2014)]{aggarwal2013data}
Charu~C Aggarwal and Chandan~K Reddy.
\newblock Data clustering.
\newblock \emph{Algorithms and applications. Chapman\&Hall/CRC Data mining and
  Knowledge Discovery series, Londra}, 2014.

\bibitem[Aksoy and Haralick(2001)]{aksoy2001feature}
Selim Aksoy and Robert~M Haralick.
\newblock Feature normalization and likelihood-based similarity measures for
  image retrieval.
\newblock \emph{Pattern recognition letters}, 22\penalty0 (5):\penalty0
  563--582, 2001.

\bibitem[Amini et~al.(2016)Amini, Saboohi, Herawan, and Wah]{amini2016mudi}
Amineh Amini, Hadi Saboohi, Tutut Herawan, and Teh~Ying Wah.
\newblock Mudi-stream: A multi density clustering algorithm for evolving data
  stream.
\newblock \emph{Journal of Network and Computer Applications}, 59:\penalty0
  370--385, 2016.

\bibitem[Ankerst et~al.(1999)Ankerst, Breunig, Kriegel, and
  Sander]{ankerst1999optics}
Mihael Ankerst, Markus~M. Breunig, Hans-Peter Kriegel, and J\"{o}rg Sander.
\newblock {OPTICS}: Ordering points to identify the clustering structure.
\newblock In \emph{Proceedings of the 1999 ACM SIGMOD International Conference
  on Management of Data}, SIGMOD '99, pages 49--60, New York, NY, USA, 1999.
  ACM.
\newblock ISBN 1-58113-084-8.

\bibitem[Bandaragoda et~al.(2018)Bandaragoda, Ting, Albrecht, Liu, Zhu, and
  Wells]{bandaragoda2018isolation}
Tharindu~R Bandaragoda, Kai~Ming Ting, David Albrecht, Fei~Tony Liu, Ye~Zhu,
  and Jonathan~R Wells.
\newblock Isolation-based anomaly detection using nearest-neighbor ensembles.
\newblock \emph{Computational Intelligence}, 34\penalty0 (4):\penalty0
  968--998, 2018.

\bibitem[Blum and Langley(1997)]{BLUM1997245}
Avrim~L. Blum and Pat Langley.
\newblock Selection of relevant features and examples in machine learning.
\newblock \emph{Artificial Intelligence}, 97\penalty0 (1):\penalty0 245 -- 271,
  1997.
\newblock ISSN 0004-3702.
\newblock Relevance.

\bibitem[Borg et~al.(2012)Borg, Groenen, and Mair]{borg2012applied}
Ingwer Borg, Patrick~JF Groenen, and Patrick Mair.
\newblock \emph{Applied Multidimensional Scaling}.
\newblock Springer Science \& Business Media, 2012.

\bibitem[Breunig et~al.(2000)Breunig, Kriegel, Ng, and Sander]{LOF2000}
Markus~M. Breunig, Hans-Peter Kriegel, Raymond~T. Ng, and J\"{o}rg Sander.
\newblock {LOF}: Identifying density-based local outliers.
\newblock \emph{SIGMOD Rec.}, 29\penalty0 (2):\penalty0 93--104, May 2000.

\bibitem[Campello et~al.(2013)Campello, Moulavi, and Sander]{HDBSCAN2013}
Ricardo J. G.~B. Campello, Davoud Moulavi, and Joerg Sander.
\newblock Density-based clustering based on hierarchical density estimates.
\newblock In Jian Pei, Vincent~S. Tseng, Longbing Cao, Hiroshi Motoda, and
  Guandong Xu, editors, \emph{Advances in Knowledge Discovery and Data Mining},
  pages 160--172, Berlin, Heidelberg, 2013. Springer Berlin Heidelberg.
\newblock ISBN 978-3-642-37456-2.

\bibitem[Cassisi et~al.(2013)Cassisi, Ferro, Giugno, Pigola, and
  Pulvirenti]{cassisi2013enhancing}
Carmelo Cassisi, Alfredo Ferro, Rosalba Giugno, Giuseppe Pigola, and Alfredo
  Pulvirenti.
\newblock Enhancing density-based clustering: Parameter reduction and outlier
  detection.
\newblock \emph{Information Systems}, 38\penalty0 (3):\penalty0 317--330, 2013.

\bibitem[Chen et~al.(2018)Chen, Ting, Washio, and Zhu]{Chen2018}
Bo~Chen, Kai~Ming Ting, Takashi Washio, and Ye~Zhu.
\newblock Local contrast as an effective means to robust clustering against
  varying densities.
\newblock \emph{Machine Learning}, pages 1--25, 2018.

\bibitem[Dem{\v{s}}ar(2006)]{demvsar2006statistical}
Janez Dem{\v{s}}ar.
\newblock Statistical comparisons of classifiers over multiple data sets.
\newblock \emph{Journal of Machine Learning Research}, 7:\penalty0 1--30, 2006.

\bibitem[Dua and Graff(2017)]{Dua:2017}
Dheeru Dua and Casey Graff.
\newblock {UCI} machine learning repository, 2017.
\newblock URL \url{http://archive.ics.uci.edu/ml}.

\bibitem[Ert{\"o}z et~al.(2003)Ert{\"o}z, Steinbach, and
  Kumar]{ertoz2003finding}
Levent Ert{\"o}z, Michael Steinbach, and Vipin Kumar.
\newblock Finding clusters of different sizes, shapes, and densities in noisy,
  high dimensional data.
\newblock In \emph{Proceedings of the 2003 SIAM International Conference on
  Data Mining}, pages 47--58. SIAM, 2003.

\bibitem[Ester et~al.(1996)Ester, Kriegel, S, and Xu]{ester1996density}
Martin Ester, Hans-Peter Kriegel, Jörg S, and Xiaowei Xu.
\newblock A density-based algorithm for discovering clusters in large spatial
  databases with noise.
\newblock In \emph{Proceedings of the Second International Conference on
  Knowledge Discovery and Data Mining (KDD-96)}, pages 226--231. AAAI Press,
  1996.

\bibitem[Fahy and Yang(2019)]{fahy2019finding}
Conor Fahy and Shengxiang Yang.
\newblock Finding and tracking multi-density clusters in online dynamic data
  streams.
\newblock \emph{IEEE Transactions on Big Data}, 2019.

\bibitem[Fawcett(2006)]{fawcett2006introduction}
Tom Fawcett.
\newblock An introduction to roc analysis.
\newblock \emph{Pattern recognition letters}, 27\penalty0 (8):\penalty0
  861--874, 2006.

\bibitem[Gramacki(2018)]{gramacki2018nonparametric}
Artur Gramacki.
\newblock \emph{Nonparametric kernel density estimation and its computational
  aspects}.
\newblock Springer, 2018.

\bibitem[Hou et~al.(2020)Hou, Zhang, and Qi]{hou2020density}
Jian Hou, Aihua Zhang, and Naiming Qi.
\newblock Density peak clustering based on relative density relationship.
\newblock \emph{Pattern Recognition}, page 107554, 2020.

\bibitem[Jolliffe(2011)]{jolliffe2011principal}
Ian Jolliffe.
\newblock Principal component analysis.
\newblock In \emph{International encyclopedia of statistical science}, pages
  1094--1096. Springer, 2011.

\bibitem[Kuhn(1955)]{kuhn1955hungarian}
Harold~W Kuhn.
\newblock The hungarian method for the assignment problem.
\newblock \emph{Naval Research Logistics Quarterly}, 2\penalty0 (1-2):\penalty0
  83--97, 1955.

\bibitem[Li et~al.(2019)Li, Yang, Qin, and Zhu]{li2019local}
Ruijia Li, Xiaofei Yang, Xiaolong Qin, and William Zhu.
\newblock Local gap density for clustering high-dimensional data with varying
  densities.
\newblock \emph{Knowledge-Based Systems}, 184:\penalty0 104905, 2019.

\bibitem[Liu et~al.(2008)Liu, Ting, and Zhou]{liu2008isolation}
Fei~Tony Liu, Kai~Ming Ting, and Zhi-Hua Zhou.
\newblock Isolation forest.
\newblock In \emph{2008 Eighth IEEE International Conference on Data Mining},
  pages 413--422. IEEE, 2008.

\bibitem[Liu et~al.(2010)Liu, Ting, and Zhou]{Liu2010}
Fei~Tony Liu, Kai~Ming Ting, and Zhi-Hua Zhou.
\newblock On detecting clustered anomalies using {SCiForest}.
\newblock In \emph{Proceedings of the 2010 European Conference on Machine
  Learning and Knowledge Discovery in Databases: Part II}, ECML PKDD'10, pages
  274--290, Berlin, Heidelberg, 2010. Springer-Verlag.
\newblock ISBN 3-642-15882-X, 978-3-642-15882-7.

\bibitem[Lotfi et~al.(2020)Lotfi, Moradi, and Beigy]{lotfi2020density}
Abdulrahman Lotfi, Parham Moradi, and Hamid Beigy.
\newblock Density peaks clustering based on density backbone and fuzzy
  neighborhood.
\newblock \emph{Pattern Recognition}, page 107449, 2020.

\bibitem[Maaten and Hinton(2008)]{maaten2008visualizing}
Laurens van~der Maaten and Geoffrey Hinton.
\newblock Visualizing data using t-sne.
\newblock \emph{Journal of machine learning research}, 9\penalty0
  (Nov):\penalty0 2579--2605, 2008.

\bibitem[Marin et~al.(2019)Marin, Tang, Ayed, and Boykov]{8166757}
Dmitrii Marin, Meng Tang, Ismail~Ben Ayed, and Yuri Boykov.
\newblock Kernel clustering: density biases and solutions.
\newblock \emph{IEEE Transactions on Pattern Analysis and Machine
  Intelligence}, 41\penalty0 (1):\penalty0 136--147, 2019.

\bibitem[Ozertem et~al.(2008)Ozertem, Erdogmus, and Jenssen]{ozertem2008mean}
Umut Ozertem, Deniz Erdogmus, and Robert Jenssen.
\newblock Mean shift spectral clustering.
\newblock \emph{Pattern Recognition}, 41\penalty0 (6):\penalty0 1924--1938,
  2008.

\bibitem[Pearson(1938)]{pearson1938probability}
Egon~Sharpe Pearson.
\newblock The probability integral transformation for testing goodness of fit
  and combining independent tests of significance.
\newblock \emph{Biometrika}, 30\penalty0 (1/2):\penalty0 134--148, 1938.

\bibitem[Qin et~al.(2019)Qin, Ting, Zhu, and Lee]{qin2019nearest}
Xiaoyu Qin, Kai~Ming Ting, Ye~Zhu, and Vincent~CS Lee.
\newblock Nearest-neighbour-induced isolation similarity and its impact on
  density-based clustering.
\newblock In \emph{Proceedings of the AAAI Conference on Artificial
  Intelligence}, volume~33, pages 4755--4762, 2019.

\bibitem[Ramaswamy et~al.(2000)Ramaswamy, Rastogi, and Shim]{Ramaswamy:2000}
Sridhar Ramaswamy, Rajeev Rastogi, and Kyuseok Shim.
\newblock Efficient algorithms for mining outliers from large data sets.
\newblock In \emph{Proceedings of the 2000 ACM SIGMOD International Conference
  on Management of Data}, SIGMOD '00, pages 427--438, New York, NY, USA, 2000.
  ACM.
\newblock ISBN 1-58113-217-4.

\bibitem[Rijsbergen(1979)]{Fmeasure}
C.~J.~Van Rijsbergen.
\newblock \emph{Information Retrieval}.
\newblock Butterworth-Heinemann, Newton, MA, USA, 2nd edition, 1979.
\newblock ISBN 0408709294.

\bibitem[Rodriguez and Laio(2014)]{rodriguez2014clustering}
Alex Rodriguez and Alessandro Laio.
\newblock Clustering by fast search and find of density peaks.
\newblock \emph{Science}, 344\penalty0 (6191):\penalty0 1492--1496, 2014.

\bibitem[Shi(2000)]{shi2000reducing}
Jonathan~Jingsheng Shi.
\newblock Reducing prediction error by transforming input data for neural
  networks.
\newblock \emph{Journal of computing in civil engineering}, 14\penalty0
  (2):\penalty0 109--116, 2000.

\bibitem[Strehl and Ghosh(2002)]{strehl2002cluster}
Alexander Strehl and Joydeep Ghosh.
\newblock Cluster ensembles---a knowledge reuse framework for combining
  multiple partitions.
\newblock \emph{Journal of Machine Learning Research}, 3\penalty0
  (Dec):\penalty0 583--617, 2002.

\bibitem[Szeliski(2010)]{szeliski2010computer}
Richard Szeliski.
\newblock \emph{Computer vision: algorithms and applications}.
\newblock Springer Science \& Business Media, 2010.

\bibitem[Tan et~al.(2020)Tan, Zhang, and Wu]{tan2020mutual}
Malong Tan, Shichao Zhang, and Lin Wu.
\newblock Mutual knn based spectral clustering.
\newblock \emph{Neural Computing and Applications}, 32\penalty0 (11):\penalty0
  6435--6442, 2020.

\bibitem[Tenenbaum et~al.(2000)Tenenbaum, De~Silva, and
  Langford]{tenenbaum2000global}
Joshua~B Tenenbaum, Vin De~Silva, and John~C Langford.
\newblock A global geometric framework for nonlinear dimensionality reduction.
\newblock \emph{science}, 290\penalty0 (5500):\penalty0 2319--2323, 2000.

\bibitem[Vinh et~al.(2010)Vinh, Epps, and Bailey]{vinh2010information}
Nguyen~Xuan Vinh, Julien Epps, and James Bailey.
\newblock Information theoretic measures for clusterings comparison: Variants,
  properties, normalization and correction for chance.
\newblock \emph{The Journal of Machine Learning Research}, 11:\penalty0
  2837--2854, 2010.

\bibitem[Wang and Sun(2015)]{Wang2015}
Fei Wang and Jimeng Sun.
\newblock Survey on distance metric learning and dimensionality reduction in
  data mining.
\newblock \emph{Data Mining and Knowledge Discovery}, 29\penalty0 (2), 2015.

\bibitem[Wang et~al.(2011)Wang, Zhao, and Zhang]{wang2011unsupervised}
Fei Wang, Bin Zhao, and Changshui Zhang.
\newblock Unsupervised large margin discriminative projection.
\newblock \emph{IEEE transactions on neural networks}, 22\penalty0
  (9):\penalty0 1446--1456, 2011.

\bibitem[Wang et~al.(2021)Wang, Li, Zhao, Zhu, Yuan, and Dai]{wang2020extreme}
Shuliang Wang, Qi~Li, Chuanfeng Zhao, Xingquan Zhu, Hanning Yuan, and Tianru
  Dai.
\newblock Extreme clustering--a clustering method via density extreme points.
\newblock \emph{Information Sciences}, 542:\penalty0 24 -- 39, 2021.
\newblock ISSN 0020-0255.

\bibitem[Zhu et~al.(2016)Zhu, Ting, and Carman]{ZHU2016983}
Ye~Zhu, Kai~Ming Ting, and Mark~J. Carman.
\newblock Density-ratio based clustering for discovering clusters with varying
  densities.
\newblock \emph{Pattern Recognition}, 60:\penalty0 983 -- 997, 2016.
\newblock ISSN 0031-3203.

\bibitem[Zhu et~al.(2018)Zhu, Ting, and Angelova]{DSCALE}
Ye~Zhu, Kai~Ming Ting, and Maia Angelova.
\newblock A distance scaling method to improve density-based clustering.
\newblock In Dinh Phung, Vincent~S. Tseng, Geoffrey~I. Webb, Bao Ho, Mohadeseh
  Ganji, and Lida Rashidi, editors, \emph{Advances in Knowledge Discovery and
  Data Mining}, pages 389--400, Cham, 2018. Springer International Publishing.
\newblock ISBN 978-3-319-93040-4.

\end{thebibliography}

\end{document}